\documentclass[sigconf]{acmart}
\usepackage{amsmath}
\usepackage{amsthm}
\usepackage{dsfont}
\usepackage{newtxmath}  

\usepackage{multirow}
\usepackage{booktabs}       
\usepackage{nicefrac}       

\usepackage{mathtools}
\usepackage{amsthm}
\usepackage{verbatim} 
\usepackage{subcaption}
\usepackage{algorithm}
\usepackage{algorithmic}
\usepackage{graphicx}
\usepackage{multirow}
\usepackage{hyperref}
\usepackage{array}
\usepackage[table]{xcolor}   
\definecolor{Lavender}{RGB}{230,230,250}  

\makeatletter
\@ifundefined{Bbbk}{}
{}
\makeatother

\newcommand{\subsidyrate}{\sigma}     
\newcommand{\exposureth}{\kappa}      
\newcommand{\feerate}{f}              
\newcommand{\feeth}{\kappa_f}         
\newcommand{\platformspend}{\rho_t} 
\newcommand{\absspend}{R_t}         
\newcommand{\userinvest}{p}           
\newcommand{\useractivity}{h}         

\AtBeginDocument{%
  }

\begin{document}

\title{Invariant Causal Routing for Governing Social Norms in Online Market Economies}

\author{Xiangning Yu}
\affiliation{%
  \institution{Tianjin University}
  \city{Tianjin}
  \country{China}
}

\author{Qirui Mi}
\affiliation{%
  \institution{Institute of Automation, Chinese Academy of Sciences}
  \city{Beijing}
  \country{China}
}

\author{Xiao Xue}
\authornote{Corresponding authors.}
\affiliation{%
  \institution{Tianjin University}
  \city{Tianjin}
  \country{China}
}

\author{Haoxuan Li}
\affiliation{%
  \institution{Peking University}
  \city{Beijing}
  \country{China}
}

\author{Yiwei Shi}
\affiliation{%
  \institution{University of Bristol}
  \city{Bristol}
  \country{United Kingdom}
}

\author{Xiaowei Liu}
\affiliation{%
  \institution{Tianjin University}
  \city{Tianjin}
  \country{China}
}

\author{Mengyue Yang}
\authornotemark[1]
\affiliation{%
  \institution{University of Bristol}
  \city{Bristol}
  \country{United Kingdom}
}

\renewcommand{\shortauthors}{Yu et al.}


\begin{abstract}
Social norms are stable behavioral patterns that emerge endogenously within economic systems through repeated interactions among agents. In online market economies, such norms—like fair exposure, sustained participation, and balanced reinvestment—are critical for long-term stability. We aim to understand the causal mechanisms driving these emergent norms and to design principled interventions that can \textit{steer} them toward desired outcomes. This is challenging because norms arise from countless micro-level interactions that aggregate into macro-level regularities, making causal attribution and policy transferability difficult. To address this, we propose \textbf{Invariant Causal Routing (ICR)}, a causal governance framework that identifies policy–norm relations stable across heterogeneous environments. ICR integrates counterfactual reasoning with invariant causal discovery to separate genuine causal effects from spurious correlations and to construct \textit{interpretable, auditable policy rules} that remain effective under distribution shift. In heterogeneous agent simulations calibrated with real data, ICR yields more stable norms, smaller generalization gaps, and more concise rules than correlation or coverage baselines, demonstrating that causal invariance offers a principled and interpretable foundation for governance. 
\end{abstract}



\begin{CCSXML}
<ccs2012>
   <concept>
       <concept_id>10003456.10003462.10003544.10003589</concept_id>
       <concept_desc>Social and professional topics~Governmental regulations</concept_desc>
       <concept_significance>300</concept_significance>
       </concept>
   <concept>
       <concept_id>10010405.10003550.10003555</concept_id>
       <concept_desc>Applied computing~Online shopping</concept_desc>
       <concept_significance>500</concept_significance>
       </concept>
   <concept>
       <concept_id>10010147.10010341.10010370</concept_id>
       <concept_desc>Computing methodologies~Simulation evaluation</concept_desc>
       <concept_significance>500</concept_significance>
       </concept>
   <concept>
       <concept_id>10010147.10010178.10010199.10010202</concept_id>
       <concept_desc>Computing methodologies~Multi-agent planning</concept_desc>
       <concept_significance>300</concept_significance>
       </concept>
 </ccs2012>
\end{CCSXML}

\ccsdesc[300]{Social and professional topics~Governmental regulations}
\ccsdesc[500]{Applied computing~Online shopping}
\ccsdesc[500]{Computing methodologies~Simulation evaluation}
\ccsdesc[300]{Computing methodologies~Multi-agent planning}
\keywords{Multi-Agent Systems,
Economic Markets,
Social Norms,
Causal Inference,
PNS,
Interpretability}

\maketitle

\addtocontents{toc}{\protect\setcounter{tocdepth}{-1}}

\section{Introduction}
\label{sec:intro}

Social norms are commonly defined as stable patterns of behavior that emerge within groups through repeated interactions. They delineate acceptable individual actions while sustaining social order and cooperation at the collective level \cite{bicchieri2005grammar, fehr2000cooperation}. Their binding force rests on mutual expectations: individuals comply because they anticipate others will do the same \cite{lewis2008convention}. A defining property of norms is therefore their \textit{stability}---they persist over long time horizons, withstand environmental shocks, and display consistent adherence across groups and contexts. We focus on understanding how social norms arise in economic systems and how governance actors can intentionally \textit{steer} the formation of social norms toward desirable collective outcomes.  

However, the formation of social norms in large-scale adaptive systems is far from transparent. Countless local interactions aggregate into macro-level regularities \cite{lewis2008convention}, making it difficult to attribute outcomes to specific actions or interventions. This opacity becomes critical when the governance actor (the online market) seeks to steer norm formation through intervention. Governance rarely dictates behavior directly; instead, it shapes \textit{structural conditions}—such as subsidies, fee rates, and exposure or pricing rules—to elicit self-organized responses that crystallize into collective norms. However, interventions often yield divergent results, effective in one context yet ineffective in another, because outcomes depend less on implementation quality than on behavioral cues and confounders including visibility, sanction, reciprocity, and imitation.

These difficulties highlight several research challenges. First, norms emerge from decentralized interactions, making causal responsibility opaque. Second, interventions face multistability and path dependence, as identical policies may lead to divergent equilibria under different initial conditions. Third, heterogeneity and asymmetry in populations amplify disparities: the same cue may elicit different reactions across groups, and uneven visibility or sanctioning further skews outcomes. Finally, even when structural relations remain intact, distributional shifts in confounders or exogenous variables distort correlations, weaken extrapolation, and raise the cost of trial-and-error.

\begin{figure}[t]
  \centering
\includegraphics[width=0.4\textwidth]{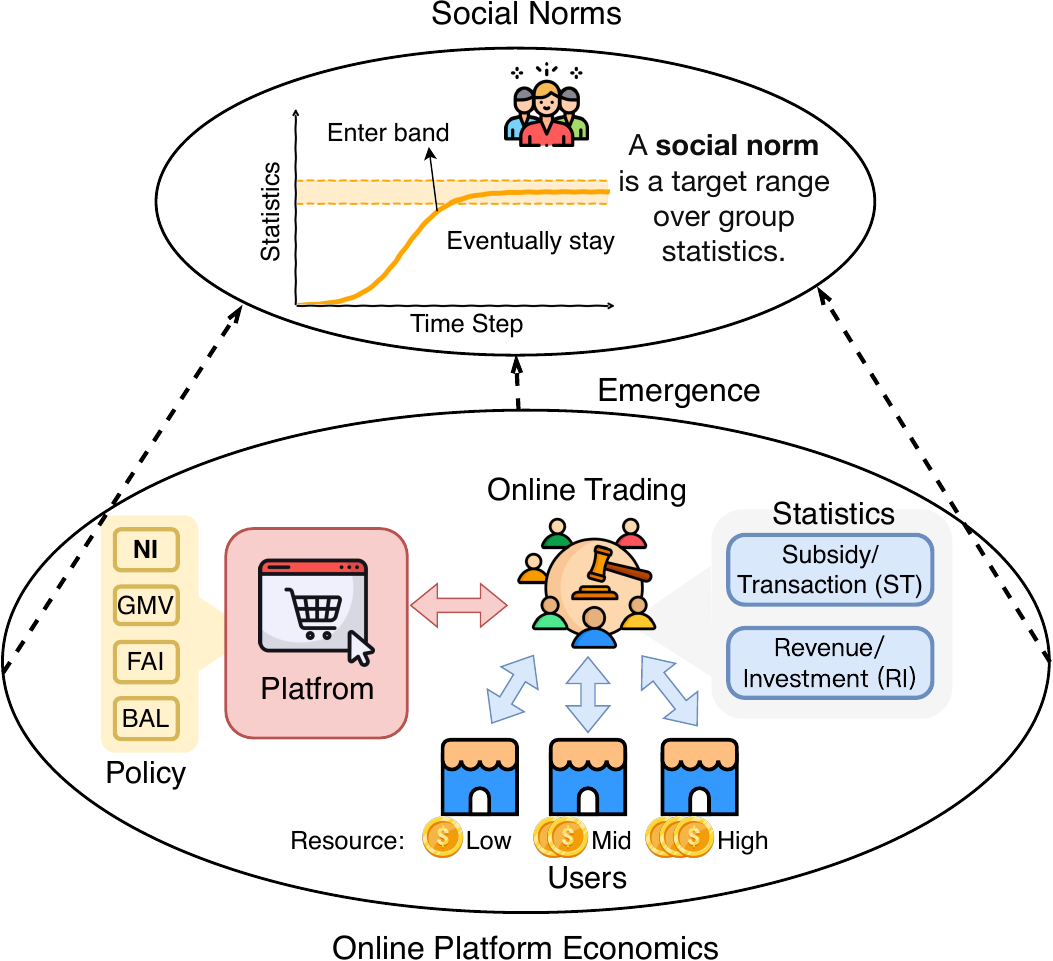}
 \caption{Social norms in onlineeconomies. }
\label{fig:social_norm}
\vspace{-0.45cm}
\end{figure}

Existing research has primarily emphasized natural emergence or aggregate outcomes in multi-agent coordination, focusing on what works ``on average.'' However, such approaches do not address the governance-relevant question: \textit{why does a given intervention succeed in one context but fail in another?} If distributional drift is the root cause, the remedy lies not in increasingly complex correlational modeling but in uncovering \textbf{causal structures invariant across environments}.  

This paper advances the claim that governance should prioritize identifying \textbf{invariant causal factors} that determine success or failure under distributional shift, rather than comparing surface-level strategies. To this end, we adopt \textbf{causal invariance} as a guiding principle and employ the \textbf{Probability of Necessity and Sufficiency (PNS)} to provide counterfactual evidence. PNS formalizes success as occurring \textit{if and only if} a given intervention is applied, thereby enabling the identification of \textbf{invariant causal routes} that remain effective across initial conditions, perturbations, and random seeds. Compared with correlation-based heuristics, PNS yields stronger out-of-distribution robustness and interpretability.  

We define \textit{norm achievement} as a threshold event where key macro statistics (e.g., revenue-to-investment and subsidy-to-transaction ratios) enter and persist within a target range, capturing long-term stability while tolerating short-term fluctuations. To operationalize this idea, we introduce a three stage causal framework that (i) identifies policies effective only when applied, revealing those invariant across environments; (ii) compiles them into a concise rule table of the form ``if context $\psi$, then apply intervention $\theta$,'' emphasizing robustness and minimal redundancy; and (iii) traces the causal pathway from governance actions to fundamental factors and eventual norm achievement, providing interpretable explanations. 

We validate this framework in an online marketplace with heterogeneous users, where subsidies and exposure policies vary while user strategies remain fixed. Norm achievement is assessed by whether macro statistics enter and persist within the target range, allowing direct evaluation of governance effectiveness.

This paper advances a causal perspective on governance by moving beyond average-effect optimization toward discovering invariant causal relations that determine policy success under distributional shifts. Our main contributions are:

\begin{enumerate}
    \item \textbf{Algorithmic Innovation – Invariant Causal Routing (ICR).} 
    We propose a novel three-stage framework that integrates causal inference and rule-based policy learning. ICR identifies \textit{causal routes}—policy-to-norm relations that remain valid across heterogeneous environments—and formulates them into a minimal, auditable rule list.

    \item \textbf{Interpretability and Causal Accountability.} 
    Grounded in the Probability of Necessity and Sufficiency (PNS), ICR provides transparent causal guarantees, distinguishing genuine policy effects from confounded correlations and offering human-readable explanations of causal mechanisms.

    \item \textbf{Empirical Validation under Distribution Shift.} 
    Through heterogeneous-agent simulations calibrated with real-world data, ICR maintains stable norm attainment under out-of-distribution conditions, achieving smaller generalization gaps and higher robustness than correlation-based and heuristic baselines.
\end{enumerate}

Together, these contributions show that \textbf{causal invariance} offers a principled basis for designing stable, interpretable, and transferable governance strategies in complex socio-economic systems.

\section{Related Work}
\label{sec:related}

The related literature spans three complementary strands.
\paragraph{Social Norms.}
Research on social norms examines their spontaneous emergence without central control, explained through expectation--compliance frameworks~\cite{bicchieri2005grammar}, stochastic evolutionary games~\cite{young1993evolution,yang2025twinmarket}, and multi-agent social laws~\cite{shoham1995social,shoham1997emergence}. Social psychology and experimental economics study their definition, maintenance, and measurement~\cite{krupka2013identifying,opp1982evolutionary,berkowitz2005overview,hechter2001social}, while recent LLM-based work shows that linguistic interaction can itself yield conventions and biases~\cite{hawkins2019emergence,dai2024artificial}. Yet prior studies lack (i) operational criteria for detecting norm formation in long-term data and (ii) causal tools linking interventions to macro norms~\cite{bicchieri2005grammar,young2015evolution,krupka2013identifying,sen2007emergence,opp1982evolutionary,chung2016social,shaffer1983toward,hechter2001social,dai2024artificial}. We define norm attainment as system-level indicators stabilizing within a tolerance band, enabling causal analysis along the chain intervention $\rightarrow$ micro responses $\rightarrow$ macro norms.
\paragraph{From Correlational to Causal Policy Governance.}
Conventional policy evaluation relies on correlations with limited causal interpretability. A/B testing and observational designs estimate average treatment effects~\cite{dellavigna2018motivates,angrist2009mostly}, while offline estimators such as propensity scores, IPW, and DR~\cite{rosenbaum1983central,dudik2011doubly}, and macro designs like difference-in-differences or synthetic control~\cite{abadie2010synthetic}, struggle with heterogeneity~\cite{imbens2015causal,athey2017state,pearl2009causality,wager2018estimation,kunzel2019metalearners} and distribution shifts~\cite{bareinboim2016causal,peters2016causal,hernan2010causal}. They often yield broad rather than context-specific insights~\cite{dellavigna2018motivates,athey2021policy}, while offline evaluation faces bias–variance and robustness challenges~\cite{jiang2016doubly,thomas2016data}. Deep RL, bandit, and learning-to-rank methods improve short-term outcomes~\cite{dulac2019challenges,swaminathan2015counterfactual,schnabel2016recommendations,joachims2017unbiased} but remain opaque and hard to audit~\cite{rudin2019stop}. Bridging interpretability, stability, and causal validity thus remains an open challenge.

\paragraph{OOD Prediction and PNS}
PNS formalizes the causal responsibility of an intervention in potential-outcome terms.  
Given treatment $A\!\in\!\{0,1\}$ and potential outcomes $Y(A)$,  
$\mathrm{PNS}=\Pr\{Y(1)=1,\;Y(0)=0\}$ measures the probability that an event occurs only under treatment~\citep{pearl2009causality,tian2000probabilities}.  
Causal approaches to out-of-distribution (OOD) generalization assume domain-invariant mechanisms.  
Some explicitly constrain causal graphs~\citep{pfister2019invariant,rothenhausler2021anchor,heinze2021conditional,gamella2020active,oberst2021regularizing,zhang2015multi},  
while others treat invariance as a representation-learning objective, such as IRM and its extensions in game-theoretic, variance-penalized, and nonlinear settings~\citep{ahuja2020invariant,krupka2013identifying,lu2021invariant,gulrajani2020search}.  
Unlike these constraints, \textit{PNS-based invariant learning}~\citep{yang2023invariant} integrates PNS directly into training, minimizing necessity–sufficiency mismatch via paired counterfactuals to yield representations that are both causally decisive and stable across domains.

\section{Preliminaries and Problem Formulation}
\label{sec:problem}

\subsection{Social Norms in Online Market}
\label{sec:socialnorm}

Social norms are stable, self-enforcing behavioral regularities that emerge endogenously through repeated interactions among heterogeneous agents. In online market economies, such norms manifest as persistent collective patterns—\textbf{fair exposure}, \textbf{sustained participation}, or \textbf{balanced reinvestment}—rather than externally imposed rules. They are characterized not by precise equilibria, but by long-run \emph{bands} of system-level indicators (e.g., exposure share, activity level, reinvestment ratio) that remain within tolerance ranges over time.

In this work, we treat social norm formation as a measurable, long-term event: a system achieves a norm when its trajectory statistics enter and \emph{persist} within a specified tolerance band. This operationalization allows for direct causal analysis and accommodates short-term fluctuations.

Formally, let $\phi:\mathcal{X}\!\to\!\mathbb{R}^d$ denote bounded continuous statistics (e.g., group-wise Revenue/Investment ratio or  Subsidy/Transaction ratio). After a burn-in period $T_{\mathrm{burn}}$, the window-$T$ average is:
\begin{equation}
\bar{\phi}_T = \frac{1}{T}\sum_{t=T_{\mathrm{burn}}+1}^{T_{\mathrm{burn}}+T}\phi(X_t),
\end{equation}
where $\phi(X_t)$ captures $d$-dimensional group metrics at time $t$.

\begin{definition}[Social Norm Band]
Given a target vector $\eta\!\in\!\mathbb{R}^d$ and tolerance $\varepsilon\!\in\!(0,\infty)^d$, the \emph{social norm band} is
\begin{equation}
\mathcal{S}_\varepsilon = \prod_{j=1}^d [\eta_j-\varepsilon_j,\ \eta_j+\varepsilon_j].
\end{equation}
A run under policy $\theta$ \emph{attains the norm} if there exists $T_0<\infty$ such that $\bar{\phi}_T\in\mathcal{S}_\varepsilon$ for all $T\!\ge\!T_0$.
\end{definition}

\begin{definition}[Social norm attainment]
\label{def:norm-stay}
Under platform strategy $\theta$, a run \emph{attains a social norm} if 
\begin{equation}
\exists\, T_0 < \infty \quad \forall\, T \ge T_0: \quad \bar\phi_T \in \mathcal S_\varepsilon.
\end{equation}
In other words, the time-averaged statistics eventually stay within the band, and exact convergence to a single point is unnecessary.
\end{definition}

\paragraph{Distance convention.}
We use the sup norm $\|v\|_{\infty}=\max_j |v_j|$ on $\mathbb{R}^d$ and the induced distance to a set:
\begin{equation}
\mathrm{dist}_\infty(x,S)\;=\;\inf_{y\in S}\,\|x-y\|_\infty.
\end{equation}
For the rectangular band $\mathcal S_\varepsilon=\prod_{j=1}^d[\eta_j-\varepsilon_j,\eta_j+\varepsilon_j]$, this reduces to
\begin{equation}
\mathrm{dist}_\infty(x,\mathcal S_\varepsilon)
=\max_{1\le j\le d}\Big\{\max\big\{(\eta_j-\varepsilon_j)-x_j,\; x_j-(\eta_j+\varepsilon_j),\;0\big\}\Big\}.
\end{equation}

\paragraph{Existence of feasible limits.}  
Under mild and standard assumptions (compactness and Feller continuity), Lemma~\ref{lem:invariant} (App.~\ref{app:invariant}) ensures the set of invariant measures $\mathcal I(P^{(\theta)})$ is nonempty and compact. Consequently, the \emph{asymptotic signature set}
\begin{equation}
\label{eq:sigma}
\Sigma(P^{(\theta)},\phi) = \left\{\int \phi\,d\pi: \pi \in \mathcal I(P^{(\theta)}) \right\} \subset \mathbb{R}^d
\end{equation}
is nonempty and compact, and every subsequential limit of $(\bar\phi_T)$ lies in $\Sigma(P^{(\theta)},\phi)$ by Theorem~\ref{thm:C-sigma} (App.~\ref{app:occ}). Thus, $\Sigma$ organizes the feasible long-run profiles of $(\bar\phi_T)$.

\begin{proposition}[Norm Feasibility Criterion]
\label{prop:feasibility}
If $\Sigma(P^{(\theta)}, \phi) \cap \mathcal S_\varepsilon \neq \varnothing$, the social norm is \emph{feasible}, meaning there exists a stationary regime whose averages satisfy the band.  
Conversely, if $\Sigma(P^{(\theta)}, \phi) \cap \mathcal S_\varepsilon = \varnothing$, then there exists a constant $\delta > 0$ such that
\begin{equation}
\liminf_{T \to \infty} \mathrm{dist}_\infty \left( \bar\phi_T, \mathcal S_\varepsilon \right) \ge \delta \quad \text{almost surely}.
\end{equation}
That is, stable attainment is impossible if all invariant averages lie outside the band.
\end{proposition}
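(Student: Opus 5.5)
The plan has two parts, matching the two halves of Proposition~\ref{prop:feasibility}, and both rest on the results quoted just before it: Lemma~\ref{lem:invariant}, which gives that $\mathcal I(P^{(\theta)})$ is nonempty and compact, and Theorem~\ref{thm:C-sigma}, which gives that every subsequential limit of $(\bar\phi_T)$ lies in $\Sigma(P^{(\theta)},\phi)$.

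For the feasibility direction, pick $s\in\Sigma\cap\mathcal S_\varepsilon$. By the definition in \eqref{eq:sigma} there is $\pi\in\mathcal I(P^{(\theta)})$ with $\int\phi\,d\pi=s$. Start the chain in $\pi$; this is the stationary regime. Stationarity makes $\mathbb E_\pi[\bar\phi_T]=s\in\mathcal S_\varepsilon$ for every $T$. If one wants pathwise statements, decompose $\pi$ into ergodic components: Birkhoff's theorem gives $\bar\phi_T\to\int\phi\,d\pi_e$ almost surely under each ergodic component $\pi_e$. Because $\mathcal S_\varepsilon$ is convex and $s$ is a barycenter of these component averages, at least one component average lies in $\mathcal S_\varepsilon$. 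In practice the statement only claims the existence of a stationary regime whose averages satisfy the band, so the first sentence essentially follows from the definitions.

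For the converse, first produce the constant $\delta$ deterministically. By Lemma~\ref{lem:invariant}, $\Sigma$ is compact, being the image of the compact set $\mathcal I$ under the map $\pi\mapsto\int\phi\,d\pi$, which is continuous in the weak topology because $\phi$ is bounded and continuous. The band $\mathcal S_\varepsilon$ is closed. The function $x\mapsto\mathrm{dist}_\infty(x,\mathcal S_\varepsilon)$ is $1$-Lipschitz and strictly positive on $\Sigma$ since the two sets are disjoint. Hence it attains a positive minimum on $\Sigma$, and we set
\[
2\delta:=\min_{x\in\Sigma}\mathrm{dist}_\infty(x,\mathcal S_\varepsilon)>0,
\]
which is a constant depending only on $\theta$, $\phi$, $\eta$ and $\varepsilon$.

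Next, argue pathwise on the almost-sure event where Theorem~\ref{thm:C-sigma} holds. Since $\phi$ is bounded, the averages $(\bar\phi_T)$ stay in a compact set. Suppose $\liminf_T\mathrm{dist}_\infty(\bar\phi_T,\mathcal S_\varepsilon)<\delta$. Then there is a subsequence $T_k$ with $\mathrm{dist}_\infty(\bar\phi_{T_k},\mathcal S_\varepsilon)<\delta$. Bolzano--Weierstrass lets us pass to a further convergent subsequence with limit $x$, and Theorem~\ref{thm:C-sigma} gives $x\in\Sigma$. Continuity of the distance then yields $\mathrm{dist}_\infty(x,\mathcal S_\varepsilon)\le\delta<2\delta$, which contradicts the choice of $\delta$. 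Therefore $\liminf_T\mathrm{dist}_\infty(\bar\phi_T,\mathcal S_\varepsilon)\ge\delta$ almost surely; in fact the bound $2\delta$ holds.

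The main point needing care is the almost-sure quantifier. We need Theorem~\ref{thm:C-sigma} to hold on a single full-measure event, for any initial law, so that the subsequence argument runs pathwise. If that theorem is only stated for the empirical occupation measures, the extra step is as follows: every weak limit point of the occupation measures is invariant (this uses Feller continuity), and weak convergence transfers to $\int\phi$ because $\phi$ is bounded and continuous. Compactness of $\Sigma$ is the only other ingredient needed to make $\delta$ uniform.
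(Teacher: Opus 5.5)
Your argument for the converse is correct, and it is the route the paper itself indicates; the paper gives no separate proof beyond the surrounding remarks. The steps are these. $\Sigma(P^{(\theta)},\phi)$ is compact by Krylov--Bogolyubov. It is disjoint from the closed band, so $\mathrm{dist}_\infty(\cdot,\mathcal S_\varepsilon)$ has a positive minimum on it. Theorem~\ref{thm:C-sigma}, together with Bolzano--Weierstrass, then turns this into the almost-sure $\liminf$ bound. On the quantifier you flag: the single full-measure event comes from applying Lemma~\ref{lem:C-md} to a countable dense subset of $C(\mathcal X)$, which exists because $\mathcal X$ is compact metric. Your last paragraph leaves this implicit. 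The burn-in shift in $\bar\phi_T$ is harmless.

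The step that fails is your optional pathwise strengthening of the feasibility half. Knowing that $s\in\mathcal S_\varepsilon$ is a barycenter of ergodic-component averages does not let you conclude that some component average lies in $\mathcal S_\varepsilon$. Convexity of the band gives only the reverse implication: if all component averages lie in the band, so does their barycenter.

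\emph{Counterexample.} Take $d=1$ and $\mathcal X=\{x_1,x_2\}$ with both states absorbing. Set $\phi(x_1)=\eta-2\varepsilon$ and $\phi(x_2)=\eta+2\varepsilon$.
\begin{itemize}
\item Every mixture of $\delta_{x_1}$ and $\delta_{x_2}$ is invariant, so $\Sigma=[\eta-2\varepsilon,\eta+2\varepsilon]$ meets $\mathcal S_\varepsilon=[\eta-\varepsilon,\eta+\varepsilon]$.
\item The measure $\pi=\tfrac12(\delta_{x_1}+\delta_{x_2})$ has $\int\phi\,d\pi=\eta$.
\item The ergodic components $\delta_{x_1}$ and $\delta_{x_2}$ have averages $\eta\mp 2\varepsilon$, both outside the band.
\item Along every path, $\mathrm{dist}_\infty(\bar\phi_T,\mathcal S_\varepsilon)=\varepsilon$ for all $T$.
\end{itemize}

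So the first half of the proposition holds exactly in the sense of your definitional argument: $\int\phi\,d\pi=\mathbb E_\pi[\bar\phi_T]\in\mathcal S_\varepsilon$ for some invariant $\pi$. It cannot be upgraded to pathwise attainment, even under some ergodic regime. In other words, $\Sigma\cap\mathcal S_\varepsilon\neq\varnothing$ is necessary for attainment, by the converse, but not sufficient. Delete the ergodic-decomposition sentence rather than trying to repair it.
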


The existence result makes the event well-posed; feasibility depends on whether the band intersects $\Sigma(P^{(\theta)}, \phi)$. We do not require $\bar{\phi}_T$ to converge to a single limit—``eventual stay'' suffices. Further discussion on convergence and ergodicity appears in App.~\ref{app:conv}.

\subsection{System Setup and Dynamics}
\label{sec:setup}

We consider a platform ecosystem comprising a platform and multiple heterogeneous user groups that repeatedly interact over time. The platform follows a fixed objective $\theta\!\in\!\Theta$ and influences user behavior through policy levers such as subsidies, fee rates, and exposure thresholds. The joint state of the system evolves as a homogeneous Markov chain:
\begin{equation}
\{X_t\}_{t\ge0}, \qquad X_{t+1}\!\sim\!P^{(\theta)}(\cdot\,|X_t),
\end{equation}
on a compact state space $\mathcal{X}$.  
Each $\theta$ corresponds to a distinct governance regime—e.g., GMV growth (GMV), fairness (FAI), balanced growth–fairness (BAL), or user welfare/retention (UW)—with a normal-intervention (NI) baseline as reference.

\begin{figure*}[t]
  \centering
\includegraphics[width=0.8\textwidth]{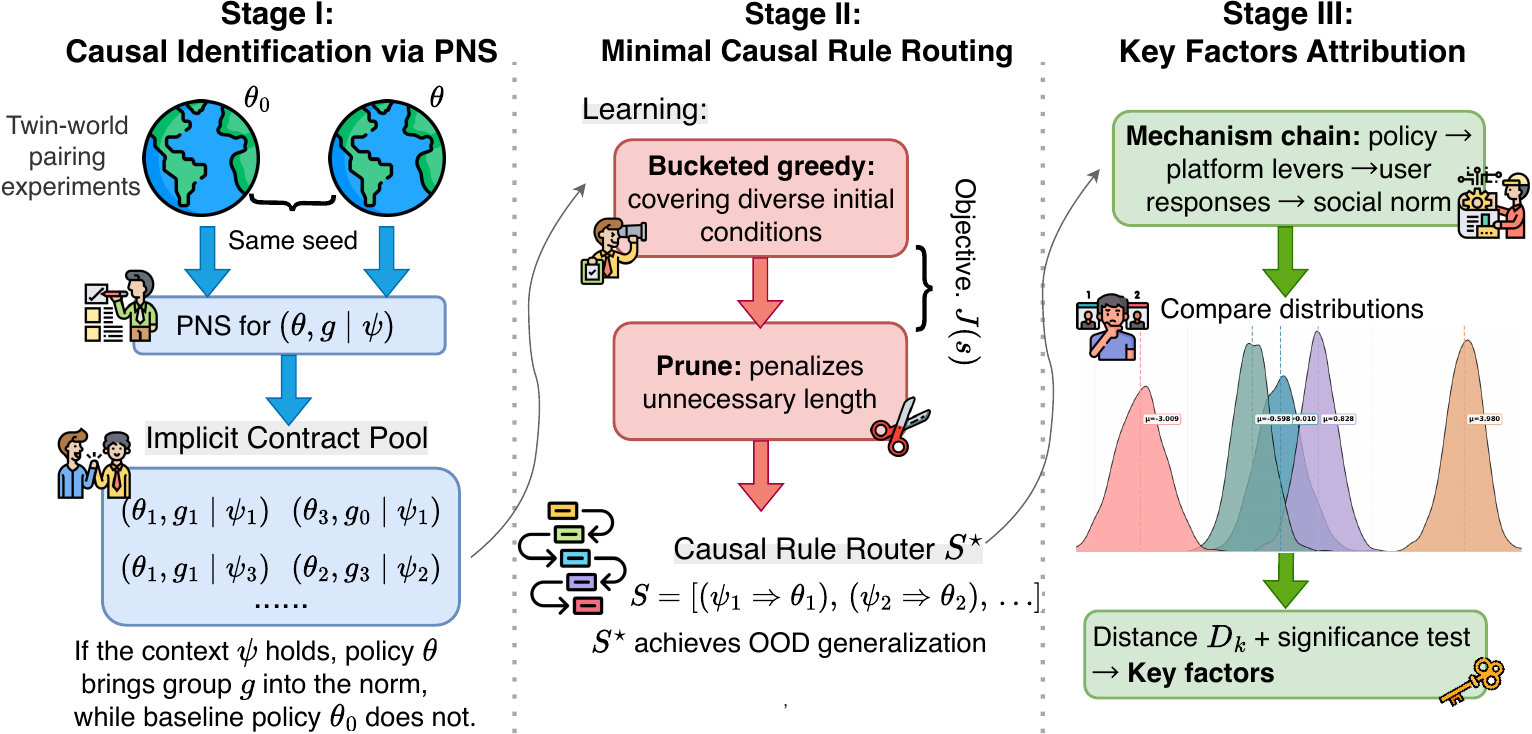}
\caption{Three-stage framework for discovering causal strategies in social norm formation.
Stage~I tests causal effects by identifying implicit contracts $(\theta, g \mid \psi)$ where policy $\theta$ enables group $g$ to reach the norm under context $\psi$.
Stage~II learns a compact rule router $S^\star$ selecting effective strategies for robust norm attainment.
Stage~III explains success by attributing norms to key factors where user responses differ from the baseline $\theta_0$.}
\label{fig:main}
\end{figure*}

\subsection{Problem Definition}
\label{sec:problem_def}

The governance objective is to identify causal mappings from platform policies $\theta$ to long-run social-norm outcomes and determine which mappings remain \emph{invariant} across heterogeneous environments or distributional shifts.  
This reframes governance from \emph{average-effect optimization} to \emph{causal invariance}—discovering factors and routes that \emph{consistently cause} norm attainment regardless of confounders. We therefore investigate three key research questions:

\begin{enumerate}
  \item \textbf{RQ1 (What: causal impact).} How do different platform objectives \emph{causally influence} the emergence and stabilization of social norms across user groups?
  \label{rq1}

  \item \textbf{RQ2 (How: invariant causal routing).} Under changing or out-of-distribution (OOD) conditions, what is the \emph{shortest and most stable routing policy}—a compact mapping from context to action—that maintains consistent causal effects on norm attainment?
  \label{rq2}

  \item \textbf{RQ3 (Why: causes of norm divergence).} What underlying \emph{mechanisms and structural factors} explain why different groups or contexts evolve toward distinct social norms, and which are necessary or sufficient for these divergences?
  \label{rq3}
\end{enumerate}

\section{Method: Three-Stage Causal Governance}
\label{sec:method}

 We propose a three-stage causal governance framework addressing RQ1--RQ3: 
\textbf{ Stage (I) Causal identification} estimates probabilistic necessity–sufficiency (PNS) to test whether a strategy $\theta$ causes a group $g$ to attain its norm band under context $\psi$. 
\textbf{Stage (II) Invariant causal routing} learns a minimal first-match rule list mapping contexts to actions, maximizing norm attainment under distribution shift while ensuring causal stability. 
\textbf{Stage (III) Key factors attribution} contrasts $\theta$ with its baseline to reveal structural and behavioral factors that drive norm stabilization or divergence.

\subsection{Stage I: Causal Identification via PNS}
\label{sec:stage1}

We define when a platform strategy $\theta$ is \emph{causally responsible} for a group $g$ meeting its target band, relative to a baseline $\theta_0$, through the \textbf{Probability of Necessity and Sufficiency (PNS)}.

\begin{definition}[Potential Outcome and PNS]
  Let \( Y_g(\theta) = B^{(g)}(\theta) \in \{0,1\} \) denote the potential outcome for group \(g\) under intervention \(\theta\), where \( B^{(g)}(\theta) = 1 \) indicates the group meets the target band. Following Pearl's counterfactual framework~\cite{pearl2009causality}, for a context predicate \(\psi\) on the initial state, the Probability of Necessity and Sufficiency (PNS) is the probability that replacing the baseline strategy \(\theta_0\) with \(\theta\) enables group \(g\) to enter the norm band.

\begin{equation}
\label{def:pns}
\mathrm{PNS}_g(\theta \mid \psi) = \Pr\left(Y_g(\theta) = 1, Y_g(\theta_0) = 0 \mid \psi = 1\right).
\end{equation}
This probability captures both the \textit{necessity} (group does not meet the target under $\theta_0$) and \textit{sufficiency} (group meets the target under $\theta$) of the platform strategy.
\end{definition}

\paragraph{PNS Identification}
We estimate PNS using paired runs on the same seed. No additional exogeneity or monotonicity assumptions are needed due to the paired design. 
We test whether a strategy $\theta$ is \emph{causally responsible} for enabling group $g$ to attain its social-norm band under a given context $\psi$.


\paragraph{Estimation.}  
Because both potential outcomes are observed under paired runs, PNS is point-identified by the unbiased estimator
\begin{equation}
\widehat{\mathrm{PNS}}_g(\theta \mid \psi) 
= \frac{1}{N_\psi} \sum_{e:\psi(X_e)=1} 
\mathbf{1}\!\left\{ Y_{g,e}(\theta)=1,\; Y_{g,e}(\theta_0)=0 \right\},
\end{equation}
where $N_\psi$ is the number of seeds with $\psi(X_e)=1$.  
Confidence intervals see App.~\ref{app:CI}.

\begin{definition}[Implicit Contract]
\label{def:implicit-contract}
Given thresholds $n_{\min}\in\mathbb N$ and $\tau_{\mathrm{pns}}\in[0,1]$, we call $(\theta,g\mid\psi)$ an \emph{implicit contract} if it satisfies both:
\begin{align}
& \text{(support)} \quad N_\psi \;\ge\; n_{\min}, \\
& \text{(accountability)} \quad \widehat{\mathrm{PNS}}_g(\theta\mid\psi) \;\ge\; \tau_{\mathrm{pns}},
\end{align}
i.e., the clause has sufficient support across initial conditions and its PNS confidence exceeds the target level.
\end{definition}

\noindent\textbf{Interpretation.}  
An implicit contract is a reliable “if–then” promise in the social-norm sense:  
\emph{if the context $\psi$ holds for an initial condition, then applying $\theta$ is causally sufficient to bring group $g$ into norm compliance, whereas baseline behavior would fail.}

\paragraph{Outcome.}  
Stage I thus yields a sound pool of \emph{implicit contracts} $(\theta,g\mid\psi)$  that satisfy both support and accountability, forming the candidate set for Stage~II.

\subsection{Stage II: Minimal Causal Rule Routing}
\label{sec:stage2}

From Stage~I we obtain a set of \emph{implicit contracts} $(\theta,g\mid\psi)$ with certified PNS. 
We assemble them into a compact, ordered decision list:
\begin{equation}
S=[(\psi_1 \Rightarrow \theta_1),\,(\psi_2 \Rightarrow \theta_2),\,\dots,(\psi_{|S|} \Rightarrow \theta_{|S|})],
\end{equation}

which applies the \emph{first} matching rule to an episode with context $X_0$; otherwise we fall back to $\theta_0$.

\paragraph{Covering diverse initial conditions (bucketed scoring).}
To explicitly expand coverage over heterogeneous initial conditions, we partition the initial contexts into buckets $\mathcal{B}=\{b\}$ (e.g., by quantiles or clustering on $X_0$) with empirical weights $w_b$. 
For each candidate rule $i$ we compute its \emph{bucketed marginal coverage} $m_{i,b}$ (the fraction of bucket-$b$ episodes newly covered under first-match order) and its certified causal gain ${\rm PNS}_{g,i,b}$ per group $g$ within bucket $b$.
This favors rules that both \emph{reach new initial-condition mass} and \emph{deliver certified gains} where they apply.

\paragraph{Objective.}
We maximize a bucketed objective that rewards union coverage of initial conditions and penalizes unnecessary length:
\begin{equation}
J(S)=\sum_{b\in\mathcal{B}} w_b \sum_{i=1}^{|S|} m_{i,b}\cdot\Big(\sum_g w_g\,{\rm PNS}_{g,i,b}\Big)\;-\;\lambda\,|S|.
\end{equation}
Here $m_{i,b}$ accounts for first-match semantics (only \emph{new} coverage counts), ensuring the list grows coverage across buckets rather than overfitting a few.

\paragraph{Learning: bucketed greedy + prune.}
We learn a short first-match rule list on validation environments as follows: (i) rank candidates by $\sum_g w_g\,{\rm PNS}_{g,i}$; (ii) iteratively add the rule with the largest \emph{bucketed marginal} improvement in $J(S)$—only not-yet-covered contexts count—and enforce a \emph{coverage safeguard} $\min_b \mathrm{Cov}_b(S)\!\ge\!\tau_{\rm cov}$; (iii) run a backward prune that removes any rule whose deletion reduces $J$ by at most $\tau_{\text{prune}}$. The resulting $S^\star$ covers diverse initial-condition buckets with few rules; we then evaluate $S^\star$ on held-out \texttt{test} seeds. See Algorithm~\ref{alg:learnrouter}.

\paragraph{Outcome.}
Stage~II yields a parsimonious router $S^\star$ with invariant causal effects. With PNS-filtered clauses and stable rule gains across buckets, $S^\star$ generalizes reliably under distribution shift.

\begin{algorithm}[t]\small
\caption{\textsc{LearnRouter} (Greedy + Buckets + Prune)}
\label{alg:learnrouter}
\begin{algorithmic}[1]
\REQUIRE Candidates $C=\{(\psi,\theta)\}$; splits train/val/test; buckets $B$ with weights $w_b$; penalty $\lambda$; coverage threshold $\tau_{\mathrm{cov}}$; prune tol. $\tau_{\text{prune}}$; max rules $K$
\ENSURE Ordered router $S$
\STATE $S\gets[\,]$;\ for $b\in B$: $\mathrm{Cov}[b]\gets\emptyset$ \textit{(first-match covered set)}
\WHILE{$|S|<K$}
  \FOR{each $c\in C\setminus S$}
    \STATE $\mathrm{ncov}[c]\gets\sum_{b} w_b\cdot\mathrm{NewCov}(c,b,\mathrm{Cov}[b];\text{train})$
    \STATE $\mathrm{gain}[c]\gets\sum_{b} w_b\cdot\mathrm{Gain}(c,b;\text{val})$
    \STATE $\mathrm{score}[c]\gets \mathrm{ncov}[c]\cdot \mathrm{gain}[c]-\lambda$
  \ENDFOR
  \STATE $c^\star\gets\arg\max_c \mathrm{score}[c]$;\ \IF{$\mathrm{score}[c^\star]\le0$}\STATE\textbf{break}\ENDIF
  \STATE append $c^\star$ to $S$;\ update all $\mathrm{Cov}[b]$
  \IF{$\min_{b}\mathrm{BucketCov}(S,b;\text{val})<\tau_{\mathrm{cov}}$}
    \STATE add best remaining rule that raises uncovered-bucket coverage with positive $\mathrm{score}$
  \ENDIF
\ENDWHILE
\FOR{each rule $r$ in $S$ from last to first}
  \IF{$\mathrm{Val}(S\setminus\{r\};\text{val})\ge \mathrm{Val}(S;\text{val})-\tau_{\text{prune}}$}
    \STATE remove $r$
  \ENDIF
\ENDFOR
\STATE \RETURN $S$
\end{algorithmic}
\end{algorithm}

\subsection{Stage III: Key Factors Attribution}
\label{sec:stage3}

Having identified effective contracts and assembled a rule router, we now seek to explain \emph{why} certain strategies succeed while others fail under the \emph{same initial conditions}. Stage~III isolates the mechanism factors that differ when the PNS event---success under $\theta$ but failure under $\theta_0$---occurs.

\paragraph{Matched comparison.}
For an implicit contract $(\theta,g\mid\psi)$, we focus on the same set of episodes satisfying $\psi(X_e)=1$, representing comparable starting conditions for group $g$. Within this matched set, we contrast the system's internal responses under the target and baseline policies. Formally, for each lever factor $f_k$,
\begin{equation}
S_k^{(\theta)} = \{\, f_k(e,\theta) : \psi(X_e)=1 \,\}, 
\qquad
S_k^{(\theta_0)} = \{\, f_k(e,\theta_0) : \psi(X_e)=1 \,\}.
\end{equation}
This captures how the same initial state evolves differently under alternative interventions.

\paragraph{Distributional divergence.}
We quantify the induced change on each factor using a normalized distance metric
\begin{equation}
D_k = \mathrm{Dist}\big(S_k^{(\theta)}, S_k^{(\theta_0)}\big),
\end{equation}
where $\mathrm{Dist}$ can be Wasserstein-1. The metric, normalization, and estimation details are provided in App.~\ref{app:hparams}.

\paragraph{Identifying key factors.}
A factor $f_k$ is deemed \emph{key} if (i) $D_k \ge \theta_{\mathrm{dist}}$, and (ii) the difference is statistically significant under the test and correction procedures described in App.~\ref{app:hparams}. These factors represent the levers through which $\theta$ causally alters user or platform behavior relative to $\theta_0$.

\paragraph{Outcome.}
Stage~III yields ranked key factors that separate successful from failed interventions under identical conditions, offering interpretable causal levers for understanding and transferring social-norm formation.

\section{Experiments}
\label{sec:experiments}

We evaluate the proposed three-stage causal governance framework in a simulated online marketplace \emph{ecosystem} calibrated with real-world data from the 2022 \emph{Survey of Consumer Finances} (SCF) \cite{scf2022}. To address RQ~\ref{rq1}, we estimate PNS via twin-world pairing to quantify the causal impact of platform interventions on norm attainment. For RQ~\ref{rq2}, we learn a minimal first-match router $S^\star$ that generalizes across seeds and initial economic regimes, verifying causal invariance under distribution shift. For RQ~\ref{rq3}, we perform mechanism attribution by contrasting endogenous platform levers and user responses across causal routes to reveal why certain interventions induce or fail to sustain stable social norms.

\subsection{Experimental Setup}

\paragraph{Platform Intervention Goals.}
We consider five operational objectives for the online market: (\textbf{NI}) a fixed normal-intervention baseline; (\textbf{GMV}) maximizing transactions or gross merchandise volume growth; (\textbf{FAI}) improving exposure fairness to ensure equal opportunity across user groups; (\textbf{BAL}) pursuing a balanced objective combining GMV growth and fairness; and (\textbf{UW}) maximizing long-run ecosystem welfare and user retention.

\paragraph{Agent Heterogeneity.}
The ecosystem comprises a \textit{platform agent} and multiple \textit{user agents}. The platform adjusts \emph{subsidies}, \emph{fee/take-rate tiers}, \emph{exposure thresholds}, and off–transaction spending, while users choose their \emph{investment share} and \emph{activity level}. Users are grouped by resources (\textit{low}, \textit{mid}, \textit{high}), forming four distinct behavioral strategies within the system.

\paragraph{Initial Conditions}
To cover diverse regimes, we evaluate five initial configurations (IC1--IC5): baseline equilibrium, conservative, aggressive, balanced-liquidity, and robust-inequality economies (parameters in App.~\ref{app:setup}).

\paragraph{Seed Shift as OOD}
Unless otherwise noted, the structural equations and noise distribution are fixed across runs. Different random seeds are treated as a weak out-of-distribution (OOD) shift in the realization of exogenous noise and initial conditions, following $X_{t+1}=f_\theta(X_t,\xi_t)$ with $\xi_t\sim P(\xi)$. Training, validation, and test sets use disjoint seeds, while twin-world comparisons share the same noise realization within each seed $e$ for $(\theta,\theta_0)$.

\begin{figure*}[t]
  \centering
 \caption{Last-50-year trajectories of $\phi_1$ (ST) and $\phi_2$ (RI) with norm bands (per objective $\times$ group). Shaded regions of the same color indicate the band within which individuals of the corresponding group are expected to remain under the associated social norm. When all trajectories lie within their respective same-colored shaded regions, the social norm is fully attained.}
\label{fig:exp1_timeseries}
\includegraphics[width=0.9\textwidth]{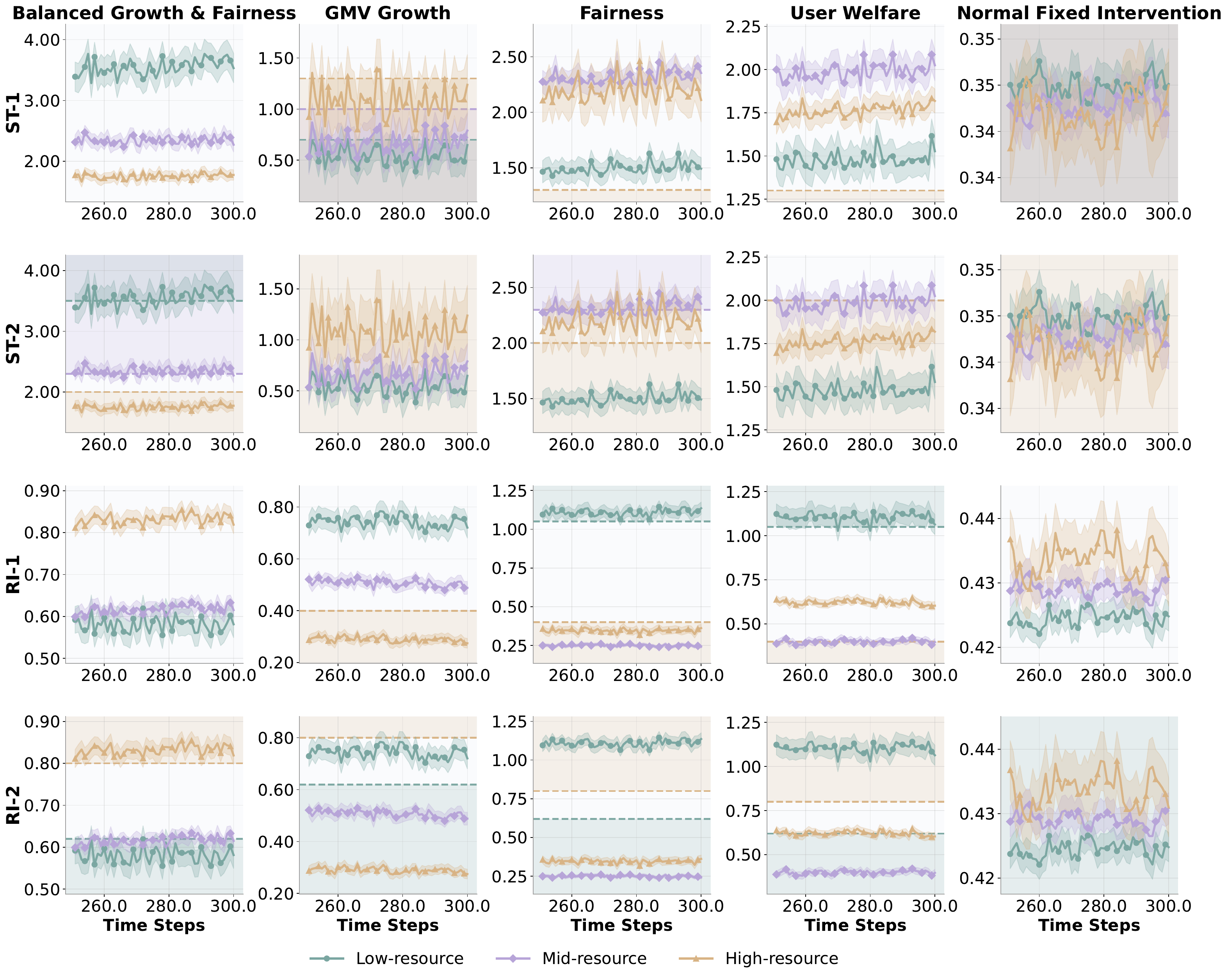}
\end{figure*}

\paragraph{Target Social Norms.}
We evaluate four social norms defined by two aggregate ratios: Subsidy/Transaction (ST) and Revenue/Investment (RI).  
For ST, \textbf{ST-1} represents low subsidy intensity where users rely mainly on organic traffic (laissez-faire), and \textbf{ST-2} denotes a regressive pattern—higher subsidies for low-resource users and lower for high-resource ones—stabilizing early growth but risking over-correction.  
For RI, \textbf{RI-1} indicates upward mobility, where low-resource users achieve higher ROI than incumbents, while \textbf{RI-2} captures concentration or entrenchment, where top users compound advantage and newcomers struggle to accumulate.
Precise social norm band ranges are given in App.~\ref{app:setup}.

\paragraph{Training Methodology}
Agents are trained with \textit{MADDPG} in a continuous, interdependent multi-agent environment with behaviors (investment/effort) and dynamic platform levers (subsidies/fees/exposure). Detailed hyperparameters are in App.~\ref{app:train}.

\paragraph{Evaluation Statistics.}
We measure norm emergence using two trajectory statistics $\phi_j(X_t)$: $\phi_1$ (Subsidy-to-Transaction ratio, ST) capturing platform subsidy intensity, and $\phi_2$ (Revenue-to-Investment ratio, RI) as a group-wise ROI proxy. After burn-in, we compute the window average $\bar{\phi}_{j,T}$ over the last $T{=}50$ steps (one step = one year). 

\paragraph{Ablation Study Baselines}

We train routers on \texttt{train} seeds and evaluate on disjoint \texttt{test} seeds across IC1--IC5; seed splits and baseline configurations are in App.~\ref{app:ablation-seeds}.
Our baselines:
\begin{itemize}
\item \textbf{PNS+Greedy}: Stage~I supplies PNS-certified clauses; rank candidates by mean target-conjunction PNS on \texttt{train}, add rules greedily, and stop when the validation objective $J(S)$ has no further strict improvement (i.e., $\Delta J \le \epsilon_{\text{imp}}$). The score includes a length penalty $\lambda_{\text{len}}$.

\item \textbf{PNS+Greedy (pruned)}: same selection, followed by \(L_0\)-style pruning using the same $J(S)$; remove a rule if the decrease is at most $\tau_{\text{prune}}$.

\item \textbf{Corr+Greedy (Pearson)}: replace PNS with the Pearson correlation \(r\) (treatment vs.\ target-band attainment) for ranking; otherwise identical to the greedy procedure.

\item \textbf{Corr+Greedy (Pearson, pruned)}: correlation-based greedy selection with the same \(L_0\) pruning.

\item \textbf{Coverage-Driven}: rank by unconditional target-band coverage on treated episodes (ignoring baseline failure); favors breadth over causal guarantees.

\item \textbf{Coverage+Corr Hybrid}: rank by \(\alpha\,\overline{\mathrm{Cov}} + (1-\alpha)\,\overline{r}\), with the same greedy and pruning steps.

\item \textbf{Majority Router}: for each \((\theta_0,\mathrm{norm})\), choose the single task that wins the most paired episodes on \texttt{train}; output one unconditional rule (targets=\texttt{ALL}).

\item \textbf{Random Router}: uniformly random permutation of candidate tasks with a fixed RNG seed \(s\) (default \(s\) in App.~\ref{app:hparams}); a weak, non-causal baseline.
\end{itemize}


\paragraph{Metrics.}
\textbf{PNS\(_\text{Target}\)} (Train/Test): measures \emph{causal} band entry for the target group(s) after switching policy—originally out of band, now in band.
\textbf{Coverage} (Train/Test): plain hit rate under the current distribution—what fraction are in band \emph{now} under $\theta$; ignores where they would be under \( \theta_0\). No counterfactuals.
\textbf{Rules}\,$\downarrow$: number of clauses in the learned first-match router (smaller is better). 
\textbf{Gen.\ Gap}\,$\downarrow$: train--test drop in causal effectiveness,
\begin{equation}
\mathrm{Gap} \;=\; \mathrm{PNS}_{\text{Target}}^{\text{train}} \;-\; \mathrm{PNS}_{\text{Target}}^{\text{test}}.
\end{equation}

\textbf{Perf.}\,$\uparrow$: overall test score that rewards causal effectiveness and basic attainability while penalizing overfitting and complexity,
\begin{equation}
\mathrm{Perf}
\;=\;
w_{\mathrm{pns}}\,\mathrm{PNS}_{\text{Target}}^{\text{test}}
\;+\;
w_{\mathrm{cov}}\,\mathrm{Coverage}^{\text{test}}
\;-\;
w_{\mathrm{gap}}\,\mathrm{Gap}
\;-\;
\lambda_{\text{len}} \cdot \mathrm{Rules}_{\mathrm{norm}}.
\end{equation}
where $\mathrm{Rules}_{\mathrm{norm}}=\mathrm{Rules}/K$, and $K$ is a normalization constant
used only for the length penalty (set to the maximum router length per table; $K{=}80$ in our ablations).

\subsection{Experimental Results}

\subsubsection{Experiment 1: Existence and Rule Discovery of Invariant Causal Routing}
\label{exp:existence}

\begin{table}[t]
\centering
\caption{Representative PNS-significant causal routes selected by Stage~I and Stage~II. Entries report PNS with 95\% binomial confidence intervals as \emph{value [lower, upper]}. }
\label{tab:exp1_pns_ic}
\begin{tabular}{l|ccc|c}
\toprule
Norm & Baseline & Current & Group & PNS \\
\midrule
RI-1 & NI         & FAI       & ALL   & 0.966 [0.904, 0.993] \\
RI-1 & GMV        & FAI       & ALL   & 0.981 [0.899, 1.000] \\
RI-1 & BAL        & FAI       & ALL   & 0.962 [0.893, 0.992] \\
ST-1 & BAL        & GMV       & ALL   & 0.857 [0.722, 0.933] \\
ST-1 & FAI        & GMV       & ALL   & 0.887 [0.774, 0.947] \\
ST-1 & UW         & GMV       & ALL   & 0.905 [0.779, 0.962] \\
RI-2 & NI         & BAL       & ALL   & 0.833 [0.735, 0.900] \\
RI-2 & FAI        & BAL       &  ALL   & 0.864 [0.761, 0.927] \\
\bottomrule
\end{tabular}
\end{table}

\begin{table*}[t]
\centering
\caption{Ablation results on disjoint train/test seeds (Perf recomputed with $K{=}80$). 
Higher $\uparrow$ is better; lower $\downarrow$ is better.}
\label{tab:exp2_ablation_style_changed}
\small
\setlength{\tabcolsep}{6pt}
\begin{tabular}{l|cc|cc|cc|ccc}
\toprule
\textbf{Method} & PNS$_\text{train}$ $\uparrow$ & Cov$_\text{train}$ $\uparrow$ & PNS$_\text{test}$ $\uparrow$ & Cov$_\text{test}$ $\uparrow$ & Rules $\downarrow$ & Rules$_\text{norm}$ $\downarrow$ & Gap $\downarrow$ & Perf. $\uparrow$ \\
\midrule
\rowcolor[HTML]{F2F2F2}
PNS+Greedy & \textbf{0.989} & \textbf{0.990} & \textbf{0.953} & \textbf{0.966} & 24 & 0.300 & \underline{0.036} & \underline{0.862} \\
\rowcolor[HTML]{F2F2F2}
PNS+Greedy (pruned) & \underline{0.972} & \underline{0.981} & \underline{0.931} & \underline{0.938} & \textbf{12} & \textbf{0.150} & 0.041 & \textbf{0.883} \\
Corr+Greedy (Pearson) & 0.805 & 0.958 & 0.741 & 0.931 & 46 & 0.575 & 0.064 & 0.600 \\
Corr+Greedy (Pearson, pruned) & 0.742 & 0.840 & 0.677 & 0.796 & \underline{18} & \underline{0.225} & 0.065 & 0.627 \\
Coverage-Driven & 0.622 & 0.944 & 0.565 & 0.915 & 48 & 0.600 & 0.057 & 0.449 \\
Coverage+Corr Hybrid & 0.416 & 0.564 & 0.384 & 0.550 & 80 & 1.000 & \textbf{0.032} & 0.114 \\
Majority Router & 0.396 & 0.540 & 0.353 & 0.519 & 20 & 0.250 & 0.043 & 0.307 \\
Random Router & 0.294 & 0.393 & 0.251 & 0.354 & 60 & 0.750 & 0.043 & 0.042 \\
\bottomrule
\end{tabular}
\end{table*}

\begin{figure}[ht]
  \centering
\includegraphics[width=0.5\textwidth]{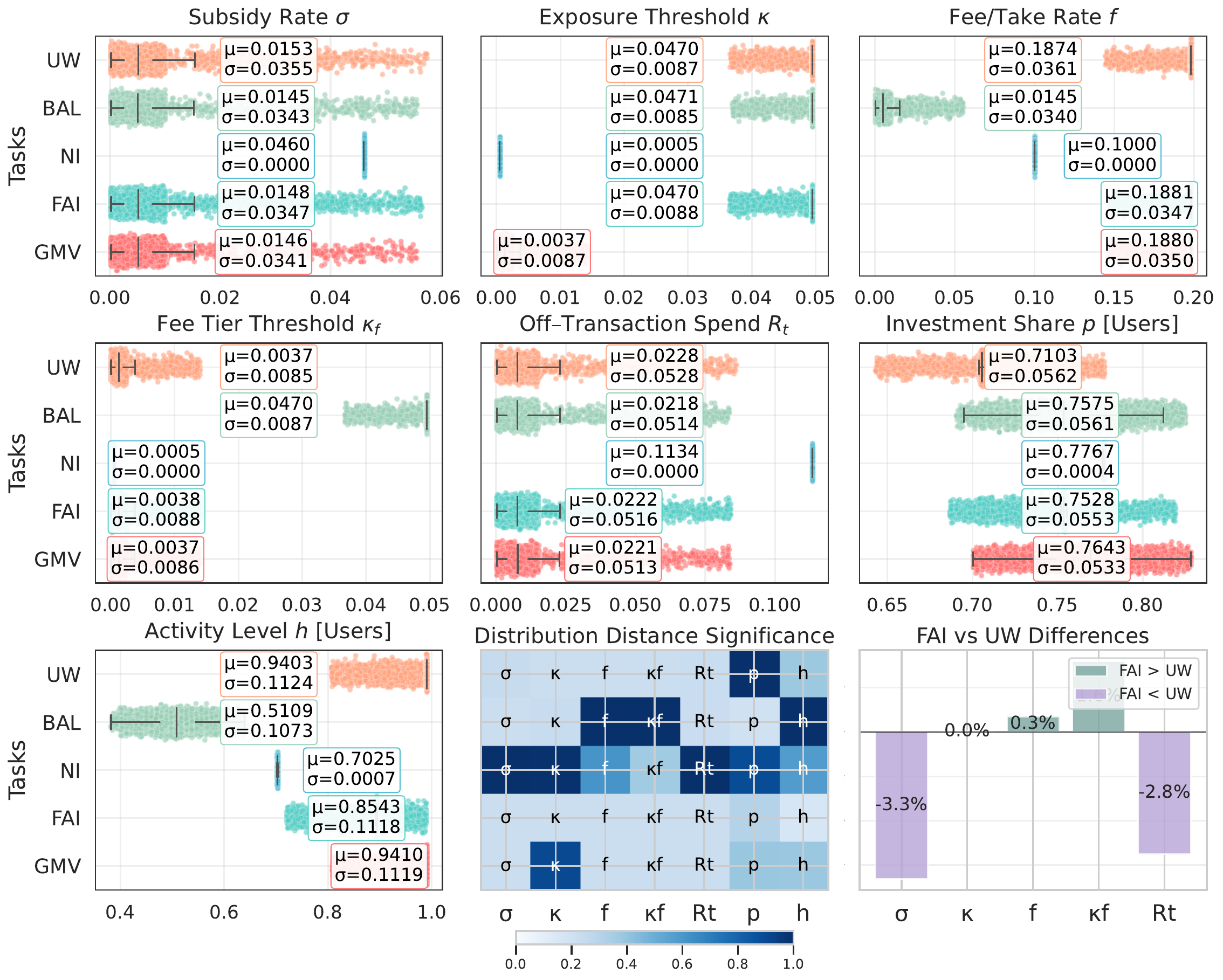}
  \caption{Key factors attribution for invariant causal routes.}
  \label{fig:exp3_strip}
\end{figure}

\paragraph{Visual evidence of norm existence.}  
Using one training-set seed, we record the final 50 steps after the system has fully evolved.  
Fig.~\ref{fig:exp1_timeseries} plots these trajectories across four norms (rows) and five tasks (columns).  
Group trajectories (low/mid/high-resource) relative to shaded bands confirm that multiple norms are stably attained under specific tasks:  
(i) \textbf{ST-1} is sustained under GMV;  
(ii) \textbf{ST-2} emerges under BAL;  
(iii) \textbf{RI-1} appears clearly under FAI; and  
(iv) \textbf{RI-2} requires BAL, while NI fails to hold bands.  
\textbf{These results provide direct evidence that social norms manifest within a simulated online market.}  

\paragraph{Invariant causal routes.}

For example, \emph{RI-1: NI$\to$FAI (ALL)} with \(\mathrm{PNS}=0.966\) means that replacing normal intervention (NI) by a fairness-oriented strategy (FAI) \emph{causes} all groups to attain the RI-1 band with probability \(0.966\) under the covered contexts. Likewise, \emph{ST-1: BAL$\to$GMV (ALL)} shows that switching from balanced growth–fairness (BAL) to a GMV-focused objective \emph{causes} the subsidy-to-transaction ratio \((\phi_1)\) to enter the ST-1 band.

PNS-certified clauses provide \textbf{causal validity} (RQ1): they quantify how strategies differentially induce norm attainment across groups.
The Stage-II router provides \textbf{actionable parsimony} (RQ2): a short, feasible rule list that generalizes across seeds/contexts while maintaining high compliance.

\paragraph{Toward generalization.}  
A key strength of this design is that PNS-based selection eliminates confounding and certifies invariance through twin-world estimation.  
As a result, the identified routes generalize across seeds without re-tuning—unlike correlation-based approaches that drift with confounder shifts.  
The next section (Exp.~\ref{exp:ablation}) evaluates this generalization explicitly, showing that PNS-based routers maintain strong compliance and small gaps on disjoint test seeds.

\subsubsection{Experiment 2: Ablations of Invariant Causal Routing}
\label{exp:ablation}



\paragraph{Results and analysis.}
\textbf{PNS+Greedy} and \textbf{PNS+Greedy (pruned)} dominate on test-time causal effectiveness and generalization under seed shift, reflecting the benefit of selecting clauses with certified necessity–sufficiency. Pruning improves parsimony with negligible loss—yielding shorter, cleaner rule lists while preserving causal validity. \textbf{Corr+Greedy (Pearson)} and \textbf{Corr+Greedy (Pearson, pruned)}—are weaker under shift, showing larger gaps and lower causal attainment, consistent with sensitivity to spurious associations. \textbf{Coverage-Driven} and \textbf{Coverage+Corr Hybrid} trade causality for breadth: they can raise unconditional hit rate but require long lists and lack counterfactual accountability, which limits robustness. \textbf{Majority Router} and \textbf{Random Router} serve as lower bounds on both effectiveness. 

Overall, our PNS-guided router \emph{realizes invariant causal routing} under distribution shift: it preserves certified causal effects across disjoint seeds while achieving strong test-time performance with compact rule lists. 

\subsubsection{Experiment 3: Interpretability and Plausibility of Invariant Causal Routing}
\label{exp:mech}

\paragraph{Platform Levers and User Responses} By stage~III, we compare the \emph{endogenous} distributions of (i) \textbf{platform levers}: subsidy rate $(\sigma)$, exposure threshold $(\kappa)$, commission $(f)$, fee–tier threshold $(\kappa_f)$\footnote{Minimum volume to qualify for a lower take-rate tier; distinct from the take-rate $f$.}, and off–transaction spend $(R_t)$\footnote{$R_t \equiv \absspend = \platformspend \cdot Y_t$, where $\platformspend$ is the spend share and $Y_t$ is aggregate GMV. Off–transaction spend covers marketing/ops, trust \& safety, and tooling.}, and (ii) \textbf{user responses}: investment share $(p)$ and activity level $(h)$. 
Each task fixes a policy mapping $\pi^{\text{task}}$. The platform applies $\pi^{\text{task}}$, users react with $(p,h)$, and the system converges to a \emph{social norm}. 
Fig.~\ref{fig:exp3_strip} shows the per–task distributions for levers and responses, highlights levers with significant cross–task differences, and provides the \emph{FAI} vs.\ \emph{UW} mean deltas (bottom–right). 

\paragraph{Why switching tasks produces these PNS routes.}For example:

\textbf{BAL $\rightarrow$ GMV $\Rightarrow$ ST-1 (Route 4 in Table~\ref{tab:exp1_pns_ic}).}  
Fig.~\ref{fig:exp3_strip} further shows that $\kappa$ (exposure threshold) and $\kappa_f$ (fee–tier threshold) decline while the commission $f$ (platform take rate) increases.  
Lower thresholds make it easier to gain exposure and to qualify for lower fee tiers, which drives $h$ (activity) upward. 
Volume rises but subsidies $\sigma$ remain flat, so per-transaction subsidy intensity is low across groups, again producing \emph{ST-1} .

\textbf{NI $\rightarrow$ FAI $\Rightarrow$ RI-1 (Route 1 in Table~\ref{tab:exp1_pns_ic}).}
Under \emph{FAI}, the platform reduces off–transaction spend $R_t$ and lowers the fee–tier threshold $\kappa_f$.
The lower $\kappa_f$ directly expands access to low-fee tiers, allowing more low-resource users to qualify for a lower \emph{realized} take rate $f$ and improving their relative inclusion (RI); meanwhile, high-resource users’ relative advantage narrows. 
This pattern is consistent with \emph{RI-1}.


\paragraph{PNS separates even subtle cases (FAI vs.\ UW)}  
In the lever panels, FAI and UW almost overlap. The bottom–right plot shows mean gaps of only a few percent: FAI has slightly higher $\kappa_f$ (fee–tier threshold) and $f$ (commission), while UW has slightly higher $R_t$ (off–transaction spend) and $\sigma$ (subsidy rate).  
Even such small, systematic shifts alter behavior at the margin: under FAI, the activity level $h$ is slightly lower but the investment share $p$ is higher, whereas under UW the reverse holds. 
PNS distinguishes these cases by tracing the full pathway—\emph{fixed policy} $\rightarrow$ \emph{endogenous levers} $(\sigma,\kappa,f,\kappa_f,R_t)$ $\rightarrow$ \emph{user responses} $(p,h)$ $\rightarrow$ \emph{social norms}—even when visual differences are minimal.

These results thus answer RQ3 by revealing interpretable, causally plausible routes of norm formation.



\section{Conclusion}

This work introduced \textbf{Invariant Causal Routing (ICR)}, a causal governance framework for discovering stable policy–norm relations under distribution shift. ICR integrates causal identification, compact rule learning, and key factor attribution to isolate genuine causal effects and construct interpretable governance strategies. Through heterogeneous agent simulations calibrated with real data, ICR demonstrates improved norm stability, smaller generalization gaps, and more concise rules compared with correlation or coverage based baselines. These results highlight the value of causal invariance as a foundation for reliable and interpretable governance in complex systems.

Beyond this study, the causal routing perspective opens several promising directions. Future work will extend ICR to multi-level governance with adaptive agents, explore causal meta-learning for dynamic environments, and validate the framework using empirical data from large-scale digital platforms. We hope this work contributes to building principled, transparent, and transferable approaches for governing social norms in evolving socio-economic ecosystems.

\newpage
\bibliographystyle{ACM-Reference-Format}
\bibliography{sample-base}

@inproceedings{yang2025twinmarket,
  author    = {Yuzhe Yang and Yifei Zhang and Minghao Wu and Kaidi Zhang and Yunmiao Zhang and Honghai Yu and Yan Hu and Benyou Wang},
  title     = {TwinMarket: A Scalable Behavioral and Social Simulation for Financial Markets},
  booktitle = {ICLR 2025 Workshop on Advances in Financial AI},
  year      = {2025},
  note      = {arXiv:2502.01506}
}

@book{meyn2009markov,
  title={Markov Chains and Stochastic Stability},
  author={Meyn, Sean P. and Tweedie, Richard L.},
  year={2009},
  publisher={Springer}
}

@book{kallenberg2002foundations,
  title={Foundations of Modern Probability},
  author={Kallenberg, Olaf},
  year={2002},
  publisher={Springer}
}

@book{billingsley1999convergence,
  title={Convergence of Probability Measures},
  author={Billingsley, Patrick},
  year={1999},
  publisher={Wiley}
}

@book{pearl2009causality,
  title={Causality},
  author={Pearl, Judea},
  year={2009},
  publisher={Cambridge university press}
}

@book{bicchieri2005grammar,
  title={The grammar of society: The nature and dynamics of social norms},
  author={Bicchieri, Cristina},
  year={2005},
  publisher={Cambridge University Press}
}

@article{young1993evolution,
  title={The evolution of conventions},
  author={Young, H Peyton},
  journal={Econometrica: Journal of the Econometric Society},
  pages={57--84},
  year={1993},
  publisher={JSTOR}
}

@article{young2015evolution,
  title={The evolution of social norms},
  author={Young, H Peyton},
  journal={Annual Review of Economics},
  volume={7},
  number={1},
  pages={359--387},
  year={2015},
  publisher={Annual Reviews}
}

@article{shoham1995social,
  title={On social laws for artificial agent societies: off-line design},
  author={Shoham, Yoav and Tennenholtz, Moshe},
  journal={Artificial intelligence},
  volume={73},
  number={1-2},
  pages={231--252},
  year={1995},
  publisher={Elsevier}
}

@article{shoham1997emergence,
  title={On the emergence of social conventions: modeling, analysis, and simulations},
  author={Shoham, Yoav and Tennenholtz, Moshe},
  journal={Artificial Intelligence},
  volume={94},
  number={1-2},
  pages={139--166},
  year={1997},
  publisher={Elsevier}
}

@article{krupka2013identifying,
  title={Identifying social norms using coordination games: Why does dictator game sharing vary?},
  author={Krupka, Erin L and Weber, Roberto A},
  journal={Journal of the European Economic Association},
  volume={11},
  number={3},
  pages={495--524},
  year={2013},
  publisher={Oxford University Press}
}

@inproceedings{sen2007emergence,
  title={Emergence of norms through social learning.},
  author={Sen, Sandip and Airiau, St{\'e}phane},
  booktitle={IJCAI},
  volume={1507},
  pages={1512},
  year={2007}
}

@article{hawkins2019emergence,
  title={The emergence of social norms and conventions},
  author={Hawkins, Robert XD and Goodman, Noah D and Goldstone, Robert L},
  journal={Trends in cognitive sciences},
  volume={23},
  number={2},
  pages={158--169},
  year={2019},
  publisher={Elsevier}
}

@article{opp1982evolutionary,
  title={The evolutionary emergence of norms},
  author={Opp, Karl-Dieter},
  journal={British journal of social psychology},
  volume={21},
  number={2},
  pages={139--149},
  year={1982},
  publisher={Wiley Online Library}
}

@article{chung2016social,
  title={Social norms: A review},
  author={Chung, Adrienne Chung Adrienne and Rimal, Rajiv N Rimal Rajiv N},
  journal={Review of Communication Research},
  volume={4},
  pages={01--28},
  year={2016}
}

@article{shaffer1983toward,
  title={Toward Pepitone's vision of a normative social psychology: What is a social norm?},
  author={Shaffer, Leigh S},
  journal={The journal of mind and behavior},
  pages={275--293},
  year={1983},
  publisher={JSTOR}
}

@article{berkowitz2005overview,
  title={An overview of the social norms approach},
  author={Berkowitz, Alan D},
  journal={Changing the culture of college drinking: A socially situated health communication campaign},
  volume={1},
  pages={193--214},
  year={2005}
}

@article{hechter2001social,
  title={Social norms},
  author={Hechter, Michael and Opp, Karl-Dieter},
  year={2001},
  publisher={Russell Sage Foundation}
}

@article{dai2024artificial,
  title={Artificial leviathan: Exploring social evolution of llm agents through the lens of hobbesian social contract theory},
  author={Dai, Gordon and Zhang, Weijia and Li, Jinhan and Yang, Siqi and Rao, Srihas and Caetano, Arthur and Sra, Misha and others},
  journal={arXiv preprint arXiv:2406.14373},
  year={2024}
}

@book{angrist2009mostly,
  title={Mostly harmless econometrics: An empiricist's companion},
  author={Angrist, Joshua D and Pischke, J{\"o}rn-Steffen},
  year={2009},
  publisher={Princeton university press}
}

@article{rosenbaum1983central,
  title={The central role of the propensity score in observational studies for causal effects},
  author={Rosenbaum, Paul R and Rubin, Donald B},
  journal={Biometrika},
  volume={70},
  number={1},
  pages={41--55},
  year={1983},
  publisher={Oxford University Press}
}

@article{dudik2011doubly,
  title={Doubly robust policy evaluation and learning},
  author={Dud{\'\i}k, Miroslav and Langford, John and Li, Lihong},
  journal={arXiv preprint arXiv:1103.4601},
  year={2011}
}

@article{abadie2010synthetic,
  title={Synthetic control methods for comparative case studies: Estimating the effect of California’s tobacco control program},
  author={Abadie, Alberto and Diamond, Alexis and Hainmueller, Jens},
  journal={Journal of the American statistical Association},
  volume={105},
  number={490},
  pages={493--505},
  year={2010},
  publisher={Taylor \& Francis}
}

@book{imbens2015causal,
  title={Causal inference in statistics, social, and biomedical sciences},
  author={Imbens, Guido W and Rubin, Donald B},
  year={2015},
  publisher={Cambridge university press}
}

@article{athey2017state,
  title={The state of applied econometrics: Causality and policy evaluation},
  author={Athey, Susan and Imbens, Guido W},
  journal={Journal of Economic perspectives},
  volume={31},
  number={2},
  pages={3--32},
  year={2017},
  publisher={American Economic Association 2014 Broadway, Suite 305, Nashville, TN 37203-2418}
}

@article{wager2018estimation,
  title={Estimation and inference of heterogeneous treatment effects using random forests},
  author={Wager, Stefan and Athey, Susan},
  journal={Journal of the American Statistical Association},
  volume={113},
  number={523},
  pages={1228--1242},
  year={2018},
  publisher={Taylor \& Francis}
}

@article{kunzel2019metalearners,
  title={Metalearners for estimating heterogeneous treatment effects using machine learning},
  author={K{\"u}nzel, S{\"o}ren R and Sekhon, Jasjeet S and Bickel, Peter J and Yu, Bin},
  journal={Proceedings of the national academy of sciences},
  volume={116},
  number={10},
  pages={4156--4165},
  year={2019},
  publisher={National Academy of Sciences}
}

@article{bareinboim2016causal,
  title={Causal inference and the data-fusion problem},
  author={Bareinboim, Elias and Pearl, Judea},
  journal={Proceedings of the National Academy of Sciences},
  volume={113},
  number={27},
  pages={7345--7352},
  year={2016},
  publisher={National Academy of Sciences}
}

@article{peters2016causal,
  title={Causal inference by using invariant prediction: identification and confidence intervals},
  author={Peters, Jonas and B{\"u}hlmann, Peter and Meinshausen, Nicolai},
  journal={Journal of the Royal Statistical Society Series B: Statistical Methodology},
  volume={78},
  number={5},
  pages={947--1012},
  year={2016},
  publisher={Oxford University Press}
}

@misc{hernan2010causal,
  title={Causal inference},
  author={Hern{\'a}n, Miguel A and Robins, James M},
  year={2010},
  publisher={CRC Boca Raton, FL}
}

@article{athey2021policy,
  title={Policy learning with observational data},
  author={Athey, Susan and Wager, Stefan},
  journal={Econometrica},
  volume={89},
  number={1},
  pages={133--161},
  year={2021},
  publisher={Wiley Online Library}
}

@inproceedings{jiang2016doubly,
  title={Doubly robust off-policy value evaluation for reinforcement learning},
  author={Jiang, Nan and Li, Lihong},
  booktitle={International conference on machine learning},
  pages={652--661},
  year={2016},
  organization={PMLR}
}

@inproceedings{thomas2016data,
  title={Data-efficient off-policy policy evaluation for reinforcement learning},
  author={Thomas, Philip and Brunskill, Emma},
  booktitle={International conference on machine learning},
  pages={2139--2148},
  year={2016},
  organization={PMLR}
}

@article{dulac2019challenges,
  title={Challenges of real-world reinforcement learning},
  author={Dulac-Arnold, Gabriel and Mankowitz, Daniel and Hester, Todd},
  journal={arXiv preprint arXiv:1904.12901},
  year={2019}
}

@inproceedings{swaminathan2015counterfactual,
  title={Counterfactual risk minimization: Learning from logged bandit feedback},
  author={Swaminathan, Adith and Joachims, Thorsten},
  booktitle={International conference on machine learning},
  pages={814--823},
  year={2015},
  organization={PMLR}
}

@inproceedings{schnabel2016recommendations,
  title={Recommendations as treatments: Debiasing learning and evaluation},
  author={Schnabel, Tobias and Swaminathan, Adith and Singh, Ashudeep and Chandak, Navin and Joachims, Thorsten},
  booktitle={international conference on machine learning},
  pages={1670--1679},
  year={2016},
  organization={PMLR}
}

@article{rudin2019stop,
  title={Stop explaining black box machine learning models for high stakes decisions and use interpretable models instead},
  author={Rudin, Cynthia},
  journal={Nature machine intelligence},
  volume={1},
  number={5},
  pages={206--215},
  year={2019},
  publisher={Nature Publishing Group UK London}
}

@inproceedings{joachims2017unbiased,
  title={Unbiased learning-to-rank with biased feedback},
  author={Joachims, Thorsten and Swaminathan, Adith and Schnabel, Tobias},
  booktitle={Proceedings of the tenth ACM international conference on web search and data mining},
  pages={781--789},
  year={2017}
}

@article{dellavigna2018motivates,
  title={What motivates effort? Evidence and expert forecasts},
  author={DellaVigna, Stefano and Pope, Devin},
  journal={The Review of Economic Studies},
  volume={85},
  number={2},
  pages={1029--1069},
  year={2018},
  publisher={Oxford University Press}
}

@article{fehr2000cooperation,
  title={Cooperation and punishment in public goods experiments},
  author={Fehr, Ernst and G{\"a}chter, Simon},
  journal={American Economic Review},
  volume={90},
  number={4},
  pages={980--994},
  year={2000},
  publisher={American Economic Association}
}

@book{lewis2008convention,
  title={Convention: A philosophical study},
  author={Lewis, David},
  year={2008},
  publisher={John Wiley \& Sons}
}

@misc{scf2022,
  author       = {{Board of Governors of the Federal Reserve System}},
  title        = {2022 Survey of Consumer Finances (SCF)},
  year         = {2023},
  publisher    = {Board of Governors of the Federal Reserve System},
  howpublished = {\url{https://www.federalreserve.gov/econres/scfindex.htm}},
  doi          = {10.17016/8799},
  note         = {The Survey of Consumer Finances (SCF) is a triennial cross-sectional survey of U.S. families, providing data on balance sheets, pensions, income, and demographic characteristics.}
}

@article{tian2000probabilities,
  title={Probabilities of causation: Bounds and identification},
  author={Tian, Jin and Pearl, Judea},
  journal={Annals of Mathematics and Artificial Intelligence},
  volume={28},
  number={1},
  pages={287--313},
  year={2000},
  publisher={Springer}
}

@article{pfister2019invariant,
  title={Invariant causal prediction for sequential data},
  author={Pfister, Niklas and B{\"u}hlmann, Peter and Peters, Jonas},
  journal={Journal of the American Statistical Association},
  volume={114},
  number={527},
  pages={1264--1276},
  year={2019},
  publisher={Taylor \& Francis}
}

@article{rothenhausler2021anchor,
  title={Anchor regression: Heterogeneous data meet causality},
  author={Rothenh{\"a}usler, Dominik and Meinshausen, Nicolai and B{\"u}hlmann, Peter and Peters, Jonas},
  journal={Journal of the Royal Statistical Society Series B: Statistical Methodology},
  volume={83},
  number={2},
  pages={215--246},
  year={2021},
  publisher={Oxford University Press}
}

@article{heinze2021conditional,
  title={Conditional variance penalties and domain shift robustness},
  author={Heinze-Deml, Christina and Meinshausen, Nicolai},
  journal={Machine Learning},
  volume={110},
  number={2},
  pages={303--348},
  year={2021},
  publisher={Springer}
}

@article{gamella2020active,
  title={Active invariant causal prediction: Experiment selection through stability},
  author={Gamella, Juan L and Heinze-Deml, Christina},
  journal={Advances in Neural Information Processing Systems},
  volume={33},
  pages={15464--15475},
  year={2020}
}

@inproceedings{oberst2021regularizing,
  title={Regularizing towards causal invariance: Linear models with proxies},
  author={Oberst, Michael and Thams, Nikolaj and Peters, Jonas and Sontag, David},
  booktitle={International Conference on Machine Learning},
  pages={8260--8270},
  year={2021},
  organization={PMLR}
}

@inproceedings{zhang2015multi,
  title={Multi-source domain adaptation: A causal view},
  author={Zhang, Kun and Gong, Mingming and Sch{\"o}lkopf, Bernhard},
  booktitle={Proceedings of the AAAI Conference on Artificial Intelligence},
  volume={29},
  number={1},
  year={2015}
}

@inproceedings{ahuja2020invariant,
  title={Invariant risk minimization games},
  author={Ahuja, Kartik and Shanmugam, Karthikeyan and Varshney, Kush and Dhurandhar, Amit},
  booktitle={International Conference on Machine Learning},
  pages={145--155},
  year={2020},
  organization={PMLR}
}

@inproceedings{lu2021invariant,
  title={Invariant causal representation learning for out-of-distribution generalization},
  author={Lu, Chaochao and Wu, Yuhuai and Hern{\'a}ndez-Lobato, Jos{\'e} Miguel and Sch{\"o}lkopf, Bernhard},
  booktitle={International Conference on Learning Representations},
  year={2021}
}

@article{gulrajani2020search,
  title={In search of lost domain generalization},
  author={Gulrajani, Ishaan and Lopez-Paz, David},
  journal={arXiv preprint arXiv:2007.01434},
  year={2020}
}

@article{yang2023invariant,
  title={Invariant learning via probability of sufficient and necessary causes},
  author={Yang, Mengyue and Fang, Zhen and Zhang, Yonggang and Du, Yali and Liu, Furui and Ton, Jean-Francois and Wang, Jianhong and Wang, Jun},
  journal={Advances in Neural Information Processing Systems},
  volume={36},
  pages={79832--79857},
  year={2023}
}

\newpage
\appendix

\section{Assumptions and RL Plausibility}
\label{app:assump}

\paragraph{Compactness and Feller Continuity: Sufficient Primitives.}
Assume that the market evolution is governed by a continuous transition function $F^{(\theta)}$ with i.i.d. exogenous noise $\xi_t$. Specifically, if $X_{t+1} = F^{(\theta)}(X_t, \xi_t)$, and $F^{(\theta)}$ is continuous while $\xi_t$ is i.i.d. with fixed law, then for any bounded continuous function $f \in C_b(\mathcal X)$, the transition kernel $P^{(\theta)}$ defined by $P^{(\theta)}f(x) = \mathbb{E}[f(F^{(\theta)}(x, \xi))]$ is continuous by the dominated convergence theorem. Consequently, $P^{(\theta)}$ is Feller. Compactness of the state space can be enforced via clipping and bounded encoders.

\paragraph{Alternative Assumptions (used only if necessary).}
We could alternatively assume \textbf{weak-Feller + tightness} on Polish spaces, or \textbf{drift + minorization} (positive Harris recurrence). These provide drop-in replacements for existence and convergence theorems. However, the main text does not rely on these alternative sets of assumptions.

\section{Existence via Krylov--Bogolyubov}
\label{app:invariant}

Standing assumptions: $(\mathcal X, d)$ is a compact metric space, and $P$ is Feller on $(\mathcal X, \mathcal B)$.

\begin{lemma}[Tightness of Ces\`aro averages]
\label{lem:invariant}
Fix $x_0 \in \mathcal X$. Define the Ces\`aro averages of the Markov chain as $\nu_T \triangleq \frac{1}{T} \sum_{t=0}^{T-1} P^t(x_0, \cdot)$. Then the family $\{\nu_T\}_{T \ge 1}$ is tight in $\mathcal P(\mathcal X)$, where $\mathcal P(\mathcal X)$ denotes the space of probability measures on $\mathcal X$.
\end{lemma}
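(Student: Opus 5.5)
The plan is to observe that tightness is automatic on a compact metric space. Recall that a family $\mathcal M\subset\mathcal P(\mathcal X)$ is tight if for every $\epsilon>0$ there is a compact $K_\epsilon\subset\mathcal X$ with $\mu(K_\epsilon)\ge 1-\epsilon$ for all $\mu\in\mathcal M$. Under the standing assumption that $(\mathcal X,d)$ is compact, I take $K_\epsilon=\mathcal X$ for every $\epsilon$. The only thing to check is that each $\nu_T$ is a genuine Borel probability measure on $\mathcal X$; then $\nu_T(\mathcal X)=1\ge 1-\epsilon$ holds uniformly in $T$.

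\emph{Step 1: each $\nu_T$ is a probability measure.} For each $t$, $P^t(x_0,\cdot)$ is a probability measure on $(\mathcal X,\mathcal B)$. This follows by induction from $P^{t+1}(x_0,A)=\int P(y,A)\,P^t(x_0,dy)$, using that $P$ is a Markov kernel, so $y\mapsto P(y,A)$ is measurable and $P(y,\cdot)$ is a probability measure. A finite convex combination of probability measures is again a probability measure, and countable additivity passes through the finite sum. Hence $\nu_T\in\mathcal P(\mathcal X)$ and $\nu_T(\mathcal X)=\frac1T\sum_{t=0}^{T-1}1=1$.

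\emph{Step 2: uniform mass on a compact set.} With $K_\epsilon=\mathcal X$ compact, we get $\inf_T \nu_T(K_\epsilon)=1$, which gives tightness. The Feller property is not needed here. By Prokhorov's theorem, tightness also yields relative compactness of $\{\nu_T\}$ in the weak topology. That is the form used afterwards: Feller continuity then shows that any weak limit point $\pi$ satisfies $\pi P=\pi$, via $|\nu_T P f-\nu_T f|\le 2\|f\|_\infty/T$ for $f\in C_b(\mathcal X)$ and weak continuity of $\mu\mapsto \mu Pf$, which is where Feller enters. I would remark on this to connect the lemma to the nonemptiness of $\mathcal I(P^{(\theta)})$, but it lies beyond the statement itself.

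There is essentially no obstacle. The only point needing care is measurability of the iterated kernels in Step 1, which is standard (Kallenberg). If one wanted the lemma on a non-compact Polish space, the real work would be producing $K_\epsilon$ from a Lyapunov drift condition, but compactness removes this entirely.
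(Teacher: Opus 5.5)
Your proof is correct and matches the paper's, which likewise rests entirely on compactness of $\mathcal X$. Your choice $K_\epsilon=\mathcal X$ is the direct form of that argument, whereas the paper appeals to weak compactness of $\mathcal P(\mathcal X)$, implicitly the converse direction of Prokhorov; your check that each $\nu_T$ is a genuine probability measure and your remark that Feller continuity is not needed here are both accurate.
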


\begin{proof}
Since $\mathcal X$ is compact, every probability measure on $\mathcal X$ is tight. Moreover, $\mathcal P(\mathcal X)$ is compact in the weak topology, which guarantees the tightness of $\{\nu_T\}_{T \ge 1}$.
\end{proof}

\begin{lemma}[Invariance of weak limits]
\label{lem:invariant-weak-limit}
If $\nu_{T_k} \Rightarrow \pi$ weakly, then $\pi$ is invariant for $P$.
\end{lemma}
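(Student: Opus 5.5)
The plan is the standard Krylov--Bogolyubov argument: show $\pi P=\pi$ by checking that $\int Pf\,d\pi=\int f\,d\pi$ for every $f\in C_b(\mathcal X)$. Since $\mathcal X$ is a compact metric space, bounded continuous functions are measure-determining, so this identity forces $\pi P=\pi$.

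Fix $f\in C_b(\mathcal X)$. First I compute the action of $P$ on the Ces\`aro average by telescoping. Since
\begin{equation*}
\int Pf\,d\nu_T=\frac1T\sum_{t=0}^{T-1}P^{t+1}f(x_0),
\end{equation*}
we get
\begin{equation*}
\Big|\int Pf\,d\nu_T-\int f\,d\nu_T\Big|=\frac1T\big|P^Tf(x_0)-f(x_0)\big|\le\frac{2\|f\|_\infty}{T}.
\end{equation*}
Along the subsequence $T_k\to\infty$, the right-hand side tends to $0$.

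Next I pass to the limit in each term. Because $\nu_{T_k}\Rightarrow\pi$ and $f$ is bounded continuous, $\int f\,d\nu_{T_k}\to\int f\,d\pi$. This is where the Feller property enters, and it is the only step needing care: $Pf$ is again bounded and continuous. It is bounded by $\|f\|_\infty$ because $P$ is a Markov kernel, and it is continuous by the Feller assumption, which App.~\ref{app:assump} verifies via dominated convergence. Weak convergence therefore also gives $\int Pf\,d\nu_{T_k}\to\int Pf\,d\pi$. Combining the two limits with the telescoping bound yields $\int Pf\,d\pi=\int f\,d\pi$.

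To finish, I use Fubini to rewrite $\int Pf\,d\pi=\int f\,d(\pi P)$. Then $\pi P$ and $\pi$ agree on all of $C_b(\mathcal X)$, so they are equal as Borel probability measures on the metric space $\mathcal X$. The main obstacle is really just the continuity of $Pf$; without the Feller property the limit exchange for $Pf$ fails, and the rest of the argument is routine.
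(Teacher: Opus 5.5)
Your proposal is correct and follows the paper's proof essentially line for line: the telescoping identity $\int Pf\,d\nu_T-\int f\,d\nu_T=\frac1T\big(P^Tf(x_0)-f(x_0)\big)\to 0$, then the Feller property to guarantee $Pf\in C_b(\mathcal X)$ so that weak convergence applies to both integrals, giving $\int Pf\,d\pi=\int f\,d\pi$ for all $f\in C_b(\mathcal X)$. You additionally spell out the Fubini rewrite $\int Pf\,d\pi=\int f\,d(\pi P)$ and the fact that $C_b$ is measure-determining, which the paper compresses into ``implying that $\pi P=\pi$''.
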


\begin{proof}
Let $f \in C_b(\mathcal X)$. We have
\[
\int (P f) \, d\nu_T - \int f \, d\nu_T = \frac{1}{T} \left(\int f \, dP^T(x_0, \cdot) - f(x_0) \right) \xrightarrow[T \to \infty]{} 0.
\]
Since $P$ is Feller, $P f \in C_b(\mathcal X)$. By weak convergence, 
\[
\int (P f) \, d\nu_{T_k} \to \int (P f) \, d\pi \quad \text{and} \quad \int f \, d\nu_{T_k} \to \int f \, d\pi.
\]
Therefore, $\int (P f) \, d\pi = \int f \, d\pi$ for all $f \in C_b(\mathcal X)$, implying that $\pi P = \pi$.
\end{proof}

\begin{theorem}[Krylov--Bogolyubov Theorem]
\label{thm:KB}
The set of invariant measures $\mathcal I(P)$ is nonempty, convex, and weakly compact. In particular, for any $\phi \in C_b(\mathcal X)$, the signature set
\[
\Sigma(P, \phi) = \left\{ \int \phi \, d\pi : \pi \in \mathcal I(P) \right\}
\]
is nonempty and compact.
\end{theorem}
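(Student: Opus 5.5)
Nonemptiness follows directly from the two preceding lemmas. We fix $x_0\in\mathcal X$ and form the Ces\`aro averages $\nu_T$. By Lemma~\ref{lem:invariant}, together with the weak compactness of $\mathcal P(\mathcal X)$ on a compact metric space (Prokhorov, or Riesz--Markov plus Banach--Alaoglu), there is a subsequence $\nu_{T_k}\Rightarrow\pi$. Lemma~\ref{lem:invariant-weak-limit} then shows that $\pi$ is invariant, so $\mathcal I(P)\neq\varnothing$.

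For \emph{convexity}, we take $\pi_1,\pi_2\in\mathcal I(P)$ and $\lambda\in[0,1]$. Linearity of $\mu\mapsto\mu P$ gives $(\lambda\pi_1+(1-\lambda)\pi_2)P=\lambda\pi_1+(1-\lambda)\pi_2$.

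For \emph{weak compactness}, it suffices to show that $\mathcal I(P)$ is weakly closed in the compact space $\mathcal P(\mathcal X)$. We write
\[
\mathcal I(P)=\bigcap_{f\in C_b(\mathcal X)}\Big\{\mu:\int Pf\,d\mu=\int f\,d\mu\Big\}.
\]
Since $P$ is Feller, $Pf\in C_b(\mathcal X)$. Hence $\mu\mapsto\int Pf\,d\mu-\int f\,d\mu$ is weakly continuous, and each set in the intersection is closed. Equivalently, if $\pi_n\in\mathcal I(P)$ and $\pi_n\Rightarrow\pi$, we pass to the limit in $\int Pf\,d\pi_n=\int f\,d\pi_n$. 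Because $C_b(\mathcal X)$ separates measures, the resulting equality $\int Pf\,d\pi=\int f\,d\pi$ for all $f$ forces $\pi P=\pi$. A closed subset of a compact space is compact.

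For the signature set, we fix $\phi\in C_b(\mathcal X)$. If $\phi$ is vector-valued as in \eqref{eq:sigma}, we argue componentwise. The map $\pi\mapsto\int\phi\,d\pi$ from $\mathcal P(\mathcal X)$ to $\mathbb R^d$ is weakly continuous by the definition of weak convergence. So $\Sigma(P,\phi)$ is the continuous image of a nonempty compact set, hence nonempty and compact. It is also convex, being the image of a convex set under an affine map, though this is not required.

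The only step needing care is the closedness argument. It relies on the Feller property, which guarantees that $Pf$ is a legitimate weak-convergence test function, and on the fact that equality of integrals against all $f\in C_b$ identifies the measures on a metric space. Metrizability of the weak topology on $\mathcal P(\mathcal X)$, for example by the L\'evy--Prokhorov metric, lets us use sequences throughout.
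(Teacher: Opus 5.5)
Your proposal is correct and follows the route the paper sets up: nonemptiness comes from the Ces\`aro averages $\nu_T$, using Lemma~\ref{lem:invariant} together with compactness of $\mathcal P(\mathcal X)$ and then Lemma~\ref{lem:invariant-weak-limit}. The paper leaves convexity, the weak closedness of $\mathcal I(P)$ (via the Feller property), and the compactness of $\Sigma(P,\phi)$ as a continuous image implicit, and your arguments for these standard steps are sound.
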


\paragraph*{References.}
\citet{meyn2009markov, kallenberg2002foundations, billingsley1999convergence}.

\section{Occupation Measures and Time Averages}
\label{app:occ}

Standing assumptions: $(\mathcal X, d)$ is compact and $P$ is Feller. The process $\{X_t\}_{t \ge 0}$ is a Markov chain with transition kernel $P$. Define $\mu_T \triangleq \frac{1}{T} \sum_{t=1}^T \delta_{X_t}$ as the occupation measure.

\begin{lemma}[Tightness of occupation measures]
\label{lem:C-tight}
The family $\{\mu_T\}_{T \ge 1}$ is tight in $\mathcal P(\mathcal X)$.
\end{lemma}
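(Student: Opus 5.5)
The claim follows directly from compactness of $\mathcal X$, in the same way as Lemma~\ref{lem:invariant}. For every $T \ge 1$ and every realization of the chain, the occupation measure $\mu_T = \frac{1}{T}\sum_{t=1}^T \delta_{X_t}$ is a probability measure on $\mathcal X$. It is a finite convex combination of Dirac masses at points of $\mathcal X$. So the family $\{\mu_T\}_{T\ge 1}$ lies in $\mathcal P(\mathcal X)$, pathwise, for each $\omega$.

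Tightness asks that for every $\epsilon>0$ there be a compact $K_\epsilon \subset \mathcal X$ with $\mu_T(K_\epsilon) \ge 1-\epsilon$ for all $T$. I will take $K_\epsilon = \mathcal X$ itself, which is compact by the standing assumption. Then $\mu_T(\mathcal X)=1$ for all $T$, uniformly in $T$ and in $\omega$. This proves tightness.

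For later use in Theorem~\ref{thm:C-sigma}, I will also record the Prokhorov consequence. Since $\mathcal X$ is a compact metric space, $\mathcal P(\mathcal X)$ is weakly compact and metrizable, for example by the L\'evy--Prokhorov or bounded-Lipschitz metric. Hence every subsequence of $(\mu_T)$ has a further weakly convergent subsequence, almost surely, and indeed for every sample path.

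There is no real obstacle here. The only point needing care is to say explicitly that tightness holds pathwise, i.e.\ for each fixed $\omega$. This matters because $\mu_T$ is a random measure, and the subsequent argument that its limit points are invariant, via a martingale difference argument with the Feller property, will be carried out on a probability-one event.
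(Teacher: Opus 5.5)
Your proof is correct and follows the paper's approach. The paper gives no separate proof for this lemma and relies, as in Lemma~\ref{lem:invariant}, on compactness of $\mathcal X$. Your choice $K_\varepsilon=\mathcal X$ gives uniform tightness directly and pathwise, which is a slightly cleaner route than the paper's appeal to weak compactness of $\mathcal P(\mathcal X)$.
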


\begin{lemma}[Ces\`aro average of martingale differences]
\label{lem:C-md}
For $f \in C_b(\mathcal X)$, define $M_{t+1} \triangleq f(X_{t+1}) - (P f)(X_t)$ as a bounded martingale difference. Then $\frac{1}{T} \sum_{t=0}^{T-1} M_{t+1} \to 0$ in $L^1$ and almost surely.
\end{lemma}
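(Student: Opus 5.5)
Two things need to be shown: that $M_{t+1}$ is a bounded martingale difference, and that its Ces\`aro average vanishes both in $L^1$ and almost surely. Work with the natural filtration $\mathcal F_t=\sigma(X_0,\dots,X_t)$.

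\textbf{Martingale-difference structure.} By the Markov property, $\mathbb E[f(X_{t+1})\mid\mathcal F_t]=(Pf)(X_t)$, so $\mathbb E[M_{t+1}\mid\mathcal F_t]=0$. Both $f$ and $Pf$ are bounded by $\|f\|_\infty$; $Pf$ is also continuous by the Feller property, though boundedness is all we need here. Hence $|M_{t+1}|\le 2\|f\|_\infty=:c$ and $M_{t+1}$ is $\mathcal F_{t+1}$-measurable.

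\textbf{$L^1$ convergence.} Set $S_T=\sum_{t=0}^{T-1}M_{t+1}$. Martingale increments are orthogonal in $L^2$: for $s<t$, condition on $\mathcal F_t$ to get $\mathbb E[M_{s+1}M_{t+1}]=0$. Therefore $\mathbb E[S_T^2]=\sum_t\mathbb E[M_{t+1}^2]\le c^2T$. Then $\mathbb E|S_T/T|\le (\mathbb E[S_T^2])^{1/2}/T\le c/\sqrt T\to0$.

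\textbf{Almost-sure convergence.} This is the only step needing care, because the $L^2$ bound $c^2/T$ is not summable in $T$. I would use the martingale strong law directly. Define $N_T=\sum_{t=0}^{T-1}M_{t+1}/(t+1)$. It is an $L^2$-martingale with $\sup_T\mathbb E[N_T^2]\le c^2\sum_t (t+1)^{-2}<\infty$, so by the $L^2$ martingale convergence theorem $N_T$ converges almost surely. Kronecker's lemma, applied with weights $b_t=t+1\uparrow\infty$, then gives $S_T/T\to0$ almost surely.

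As an alternative avoiding Kronecker, one could use Azuma--Hoeffding: $\Pr(|S_T|\ge\epsilon T)\le 2e^{-\epsilon^2T/(2c^2)}$. This bound is summable, so Borel--Cantelli gives $\limsup_T|S_T|/T\le\epsilon$ almost surely. Intersecting over rational $\epsilon$ yields the claim.
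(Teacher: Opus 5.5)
Your proof is correct and complete. The paper states this lemma in App.~\ref{app:occ} without proof, so there is no paper route to compare against. What you wrote is the standard martingale strong law the paper is implicitly invoking.

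Each step checks out:
\begin{itemize}
\item The martingale-difference property follows from the Markov property, with $|M_{t+1}|\le 2\|f\|_\infty$.
\item The orthogonality bound $\mathbb E[S_T^2]\le c^2T$ gives $\mathbb E|S_T/T|\le c/\sqrt T$.
\item For the almost-sure step, the $L^2$-bounded martingale $N_T=\sum_{t<T}M_{t+1}/(t+1)$ converges a.s., and Kronecker's lemma with $b_t=t+1$ gives $S_T/T\to 0$.
\item Your Azuma--Hoeffding plus Borel--Cantelli alternative is also valid, and it additionally yields exponential tail bounds.
\end{itemize}

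Two small remarks. First, once you have a.s.\ convergence, $L^1$ convergence also follows at once from bounded convergence, since $|S_T/T|\le c$. Your direct bound is still worth keeping because it gives a rate.

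Second, the exceptional null set in your a.s.\ statement depends on $f$. When this lemma is fed into Lemma~\ref{lem:C-inv}, you should apply it to a countable dense subset of $C(\mathcal X)$, which exists because $\mathcal X$ is compact metric. That gives a single null set valid for all test functions simultaneously.
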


\begin{lemma}[Invariance of weak limits of $\mu_T$]
\label{lem:C-inv}
If $\mu_{T_k} \Rightarrow \mu_\star$ weakly, then $\mu_\star \in \mathcal I(P)$.
\end{lemma}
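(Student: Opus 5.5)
The plan is to show that every subsequential weak limit of the occupation measures is invariant by testing against bounded continuous functions. Concretely, for each $f \in C_b(\mathcal X)$ I will show that
\[
\int (Pf)\,d\mu_\star = \int f\,d\mu_\star .
\]
Since a finite Borel measure on a compact metric space is determined by its integrals against $C_b(\mathcal X)$, this identity gives $\mu_\star P = \mu_\star$, i.e.\ $\mu_\star \in \mathcal I(P)$. Because the $\mu_T$ are random, the statement is to be read almost surely. The null set must not depend on $f$, so I will work on a single full-measure event, built below.

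First, fix $f \in C_b(\mathcal X)$ and telescope:
\[
\int (Pf)\,d\mu_T - \int f\,d\mu_T = \frac1T\sum_{t=1}^T \big[(Pf)(X_t) - f(X_t)\big].
\]
Reindex this as $\frac1T\sum_{t=0}^{T-1}\big[(Pf)(X_t) - f(X_{t+1})\big]$ plus boundary terms $\frac1T\big[(Pf)(X_T) - (Pf)(X_0)\big]$. These boundary terms are $O(2\|f\|_\infty/T)$, because $\|Pf\|_\infty \le \|f\|_\infty$. The main sum equals $-\frac1T\sum_{t=0}^{T-1} M_{t+1}$ with $M_{t+1} = f(X_{t+1}) - (Pf)(X_t)$. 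By Lemma~\ref{lem:C-md}, this tends to $0$ almost surely. Hence
\[
\int (Pf)\,d\mu_T - \int f\,d\mu_T \to 0 \quad \text{a.s. for each fixed } f.
\]

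Second, handle the quantifier over $f$, which I expect to be the main obstacle, since Lemma~\ref{lem:C-md} gives a null set depending on $f$. Because $\mathcal X$ is compact metric, $C(\mathcal X)$ is separable, so I pick a countable sup-norm dense set $\{f_n\}$ and let $\Omega_0$ be the intersection of the corresponding full-measure events, so $\Omega_0$ still has full measure. On $\Omega_0$, for arbitrary $f$ with $\|f - f_n\|_\infty < \epsilon$, the bounds $\|Pf - Pf_n\|_\infty \le \epsilon$ and $|\int g\,d\mu_T| \le \|g\|_\infty$ give
\[
\limsup_T \Big|\int (Pf)\,d\mu_T - \int f\,d\mu_T\Big| \le 2\epsilon .
\]
Letting $\epsilon \to 0$ extends the convergence to all $f$ on $\Omega_0$.

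Finally, pass to the limit along $T_k$. The Feller property gives $Pf \in C_b(\mathcal X)$, so weak convergence $\mu_{T_k} \Rightarrow \mu_\star$ yields $\int Pf\,d\mu_{T_k} \to \int Pf\,d\mu_\star$ and $\int f\,d\mu_{T_k} \to \int f\,d\mu_\star$. Combining this with the previous step gives $\int Pf\,d\mu_\star = \int f\,d\mu_\star$ for every $f$, hence $\mu_\star \in \mathcal I(P)$ on $\Omega_0$. This mirrors the deterministic argument of Lemma~\ref{lem:invariant-weak-limit}, with the martingale term replacing the exact telescoping.
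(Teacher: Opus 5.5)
Your proof is correct and follows the intended route. The paper states this lemma without a proof, but the setup (Lemma~\ref{lem:C-md} immediately preceding it, and the proof of Lemma~\ref{lem:invariant-weak-limit}) implies the same telescoping-plus-Feller argument with the martingale averages of Lemma~\ref{lem:C-md} in place of exact telescoping. You also add two points the paper leaves implicit: the statement holds only almost surely, and a single null set comes from a countable sup-norm dense subset of $C(\mathcal X)$.
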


\begin{theorem}[Limit points of time averages lie in $\Sigma$]
\label{thm:C-sigma}
Let $\phi \in C_b(\mathcal X)$ and $\bar\phi_T = \int \phi \, d\mu_T$. Every subsequential limit of $(\bar\phi_T)$ equals $\int \phi \, d\pi$ for some $\pi \in \mathcal I(P)$. Hence, all limit points belong to $\Sigma(P, \phi)$.
\end{theorem}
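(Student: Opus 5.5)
The plan is a pathwise compactness argument. Fix $\phi\in C_b(\mathcal X)$ and work on a single sample path $\omega$. Since $\mathcal X$ is compact, $\mathcal P(\mathcal X)$ is weakly compact, as in Lemma~\ref{lem:C-tight}. Suppose $\bar\phi_{T_k}\to L$ along some subsequence $T_k\to\infty$. Then I pass to a further subsequence $T_{k_j}$ along which the occupation measures converge weakly, $\mu_{T_{k_j}}\Rightarrow\mu_\star$. Because $\phi$ is bounded and continuous, weak convergence gives $\bar\phi_{T_{k_j}}=\int\phi\,d\mu_{T_{k_j}}\to\int\phi\,d\mu_\star$. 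Hence $L=\int\phi\,d\mu_\star$.

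It then suffices to show that $\mu_\star\in\mathcal I(P)$ almost surely; this is the content of Lemma~\ref{lem:C-inv}.

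The key step is invariance, and the obstacle is handling the null sets uniformly over subsequences. Since $\mathcal X$ is a compact metric space, $C(\mathcal X)$ is separable, so I fix a countable dense family $\{f_m\}$. For each $m$, Lemma~\ref{lem:C-md} gives
\[
\frac1T\sum_{t=0}^{T-1}\bigl(f_m(X_{t+1})-(Pf_m)(X_t)\bigr)\to0
\]
off a null set $N_m$. Let $N=\bigcup_m N_m$, which is still null. On $N^c$ I rewrite the sum as $\int f_m\,d\mu_T-\int Pf_m\,d\mu_T+O(\|f_m\|_\infty/T)$; the error term comes from the index shift between $\sum_{t=1}^T$ and $\sum_{t=0}^{T-1}$.

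Next I use that $P$ is Feller, so $Pf_m\in C_b(\mathcal X)$. Passing to the limit along the subsequence then gives $\int f_m\,d\mu_\star=\int Pf_m\,d\mu_\star$ for every $m$. Density extends this identity to all $f\in C(\mathcal X)$. Since $|\int Pf\,d\mu-\int Pg\,d\mu|\le\|f-g\|_\infty$, the extension passes through $P$ as well. Therefore $\mu_\star P=\mu_\star$, i.e. $\mu_\star\in\mathcal I(P)$. Crucially, $N$ does not depend on the chosen subsequence, so the conclusion holds simultaneously for all limit points on $N^c$.

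Combining the two parts, every subsequential limit $L$ equals $\int\phi\,d\pi$ with $\pi=\mu_\star\in\mathcal I(P)$, almost surely. This is exactly membership in $\Sigma(P,\phi)$.

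It remains to sketch Lemma~\ref{lem:C-md}, which I take as given. Its proof runs as follows. The increments $M_{t+1}$ are bounded by $2\|f\|_\infty$, so $\mathbb E\bigl[(\sum_{t<T} M_{t+1})^2\bigr]\le 4\|f\|_\infty^2T$. This gives $L^2$ and hence $L^1$ convergence of the Cesàro average to $0$. Almost sure convergence follows either from the martingale strong law, since $\sum_t \mathbb E[M_t^2]/t^2<\infty$, or from Azuma's inequality combined with Borel--Cantelli.
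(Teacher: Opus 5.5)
Your proposal is correct and follows the route the paper lays out through Lemmas~\ref{lem:C-tight}, \ref{lem:C-md}, and \ref{lem:C-inv}: compactness of $\mathcal P(\mathcal X)$ to extract a weakly convergent subsequence of $\mu_{T_k}$, the martingale-difference law of large numbers together with the Feller property to show that the limit is invariant, and continuity of $\phi$ to identify $L=\int\phi\,d\mu_\star$. The paper leaves two details implicit that you supply: a countable dense family in $C(\mathcal X)$ gives a single null set valid for all path-dependent subsequences, and the ``almost surely'' qualifier, which the paper's statement omits, is genuinely needed.
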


\label{rem:C-borel}
If $\phi$ is only a bounded Borel function, weak convergence need not imply continuity of integrals. However, the conclusion of Theorem~\ref{thm:C-sigma} still holds if $\phi$ is $\pi$-almost surely continuous for every $\pi \in \mathcal I(P)$ or if there exist $(\phi_n) \subset C_b(\mathcal X)$ such that $\|\phi_n - \phi\|_{L^1(\pi)} \to 0$ uniformly over $\pi \in \mathcal I(P)$.

\section{Convergence of Long-Run Statistics}
\label{app:conv}
The main text does \emph{not} require convergence assumptions. For completeness we recall two sufficient conditions.

\begin{definition}[$\phi$-uniqueness]\label{def:D-phiuniq}
$P$ has \emph{$\phi$-uniqueness} if $\int\phi\,d\pi$ equals the same constant $c_\phi$ for all $\pi\in\mathcal I(P)$.
\end{definition}

\begin{theorem}[$\phi$-uniqueness $\Rightarrow$ convergence]\label{thm:D-phi}
Under $\phi$-uniqueness, $\bar\phi_T\to c_\phi$ almost surely.
\end{theorem}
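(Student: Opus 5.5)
The plan is to combine Theorem~\ref{thm:C-sigma} with compactness of $\mathcal X$. Fix a sample path $\omega$ in the full-measure event on which the conclusion of Theorem~\ref{thm:C-sigma} holds. Since $\phi$ is bounded, the sequence $(\bar\phi_T(\omega))_T$ is bounded in $\mathbb{R}$ (or $\mathbb{R}^d$).

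Take an arbitrary subsequence $T_k$. Boundedness gives a further subsequence $T_{k_j}$ along which $\bar\phi_{T_{k_j}}(\omega)$ converges. To be safe about the exact content of Theorem~\ref{thm:C-sigma}, I would route the argument through the occupation measures. By Lemma~\ref{lem:C-tight} and compactness of $\mathcal P(\mathcal X)$, we may refine the subsequence so that $\mu_{T_{k_j}}\Rightarrow\mu_\star$ weakly. Lemma~\ref{lem:C-inv} then gives $\mu_\star\in\mathcal I(P)$, and continuity of $\phi$ gives
\[
\bar\phi_{T_{k_j}}=\int\phi\,d\mu_{T_{k_j}}\to\int\phi\,d\mu_\star .
\]

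By $\phi$-uniqueness (Definition~\ref{def:D-phiuniq}), $\int\phi\,d\mu_\star=c_\phi$. So every subsequence of $(\bar\phi_T(\omega))$ has a further subsequence converging to the same constant $c_\phi$. The standard subsequence criterion for real sequences then yields $\bar\phi_T(\omega)\to c_\phi$. Because $\omega$ ranges over a probability-one set, this is almost sure convergence.

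The main obstacle is the almost-sure quantifier. Lemma~\ref{lem:C-inv} needs to hold pathwise on a single null-set complement that works simultaneously for all test functions $f\in C_b(\mathcal X)$. Lemma~\ref{lem:C-md} supplies an exceptional null set for each fixed $f$. Since $\mathcal X$ is a compact metric space, $C(\mathcal X)$ is separable. I would therefore take a countable dense family $\{f_m\}$, intersect the corresponding full-measure events, and extend to all $f$ by uniform approximation, using $\|Pf-Pf_m\|_\infty\le\|f-f_m\|_\infty$. On this common event the invariance argument of Lemma~\ref{lem:C-inv} runs for every subsequence at once, which is exactly what the pathwise argument above requires.
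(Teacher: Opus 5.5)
Your argument is correct and is essentially the intended one: the paper states this theorem without a written proof, as an immediate consequence of Theorem~\ref{thm:C-sigma}, where $\phi$-uniqueness collapses $\Sigma(P,\phi)$ to $\{c_\phi\}$ and a bounded sequence whose only limit point is $c_\phi$ converges to it. Your extra step of building a single full-measure event from a countable dense family in $C(\mathcal X)$ makes explicit the almost-sure quantifier that the appendix leaves implicit in Lemma~\ref{lem:C-inv} and Theorem~\ref{thm:C-sigma}.
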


\paragraph{Harris ergodic LLN.}
If the chain is $\psi$-irreducible, aperiodic, and positive Harris recurrent, there is a unique invariant probability $\pi$, and $\frac{1}{T}\sum_{t=1}^T \phi(X_t)\to \int \phi\,d\pi$ a.s. for all integrable $\phi$ \citep[Thm.~17.0.1]{meyn2009markov}.

\section{Detailed Estimation Method and Confidence Interval Calculation}
\label{app:CI}

We estimate the Probability of Necessity and Sufficiency (PNS) using the following unbiased estimator:
\begin{equation*}
\widehat{\mathrm{PNS}}_g(\theta \mid \psi)
= \frac{1}{N_\psi} \sum_{e:\,\psi(X_e)=1}
\mathbf{1}\!\left\{\, Y_{g,e}(\theta)=1,\; Y_{g,e}(\theta_0)=0 \right\}.
\end{equation*}
where $N_\psi$ is the number of seeds satisfying $\psi(X_e)=1$.

For confidence intervals of this estimator, we use the Wilson score interval for a binomial proportion. The confidence intervals can be computed as follows:
Let $N = N_\psi$, $k=\sum_{e:\,\psi(X_e)=1}\mathbf{1}\{Y_{g,e}(\theta)=1,\ Y_{g,e}(\theta_0)=0\}$, and $\hat p = k/N$.

\textbf{Wilson score interval:}
\begin{equation*}
\left(
\frac{\hat p+\frac{z_{\alpha/2}^2}{2N}-z_{\alpha/2}\sqrt{\frac{\hat p(1-\hat p)}{N}+\frac{z_{\alpha/2}^2}{4N^2}}}{1+\frac{z_{\alpha/2}^2}{N}},
\quad 
\frac{\hat p+\frac{z_{\alpha/2}^2}{2N}+z_{\alpha/2}\sqrt{\frac{\hat p(1-\hat p)}{N}+\frac{z_{\alpha/2}^2}{4N^2}}}{1+\frac{z_{\alpha/2}^2}{N}}
\right).
\end{equation*}


\section{Experiment Setup}
\label{app:setup}

\subsection{Social Norm Ranges}

We instantiate per–group bands for ST (Subsidy/Transaction) and RI (Revenue/Investment). Group names follow the main text: \textbf{low-resource}, \textbf{mid-resource}, \textbf{high-resource}. Intervals use standard open/closed notation; $\infty$ denotes no upper bound.

\paragraph{ST-1 (low subsidy intensity across groups).}
\[
\begin{aligned}
\textbf{low-resource}  &:~ (0,\;0.7], \\
\textbf{mid-resource}  &:~ (0,\;1.0], \\
\textbf{high-resource} &:~ (0,\;1.3].
\end{aligned}
\]

\paragraph{ST-2 (regressive subsidy: higher for low-resource, lower for high-resource).}
\[
\begin{aligned}
\textbf{low-resource}  &:~ [3.5,\;\infty), \\
\textbf{mid-resource}  &:~ [2.3,\;\infty), \\
\textbf{high-resource} &:~ (0,\;2.0].
\end{aligned}
\]

\paragraph{RI-1 (upward mobility: higher RI for low-resource, lower for high-resource).}
\[
\begin{aligned}
\textbf{low-resource}  &:~ [1.05,\;\infty), \\
\textbf{high-resource} &:~ (0,\;0.4].
\end{aligned}
\]

\paragraph{RI-2 (concentration: lower RI for low-resource, higher for high-resource).}
\[
\begin{aligned}
\textbf{low-resource}  &:~ (0,\;0.62], \\
\textbf{high-resource} &:~ (0.8,\;\infty).
\end{aligned}
\]

\noindent
These bands are used for the “norm attainment” event defined in the main text: a run attains a norm if each group’s long-run statistic eventually stays within its band.

\subsection{Initial Configurations}
\label{app:ic}
Key parameters for five economies (CRRA=risk aversion, IFE=effort elasticity, $e_p$=superstar prob., $e_q$=stability, $\rho_e$=persistence, $\sigma_e$=shock, $super_e$=multiplier), see Table~\ref{tab:ic}.
Experiments 1 and 3 use IC1, whereas the ablation study (Experiment 2) uses IC1--IC5.

\begin{table}[ht]
\centering
\caption{Initial-condition (IC) configurations used in all experiments.}
\label{tab:ic}
\small
\begin{tabular}{lccccccc}
\toprule
IC & CRRA & IFE & $e_p$ & $e_q$ & $\rho_e$ & $\sigma_e$ & $super_e$ \\
\midrule
IC1 & 1.0 & 2.0 & $2.2\!\times\!10^{-6}$ & 0.990 & 0.982 & 0.20 & 504.3 \\
IC2 & 1.5 & 1.8 & $5.0\!\times\!10^{-6}$ & 0.985 & 0.975 & 0.18 & 400 \\
IC3 & 0.7 & 2.5 & $8.0\!\times\!10^{-6}$ & 0.985 & 0.990 & 0.15 & 350 \\
IC4 & 1.2 & 2.2 & $1.0\!\times\!10^{-5}$ & 0.920 & 0.950 & 0.22 & 450 \\
IC5 & 2.0 & 1.5 & $3.0\!\times\!10^{-6}$ & 0.995 & 0.990 & 0.25 & 600 \\
\bottomrule
\end{tabular}
\end{table}

\subsection{Action Space: Platform and Users}
\label{app:action-platform}

To keep optimization well-posed under budget/resource constraints, we parameterize controls by \emph{proportions} (ratios) rather than raw levels. This keeps the feasible set compact and stabilizes MARL training.

\paragraph{User actions.}
Each user \(i\) chooses an \emph{investment share} \(\userinvest_t^i \in (0,1)\) (e.g., ads/promo budget ratio) and an \emph{activity level} \(\useractivity_t^i \in [0, h_{\max}]\) (e.g., listing/engagement frequency). Let \(rev_t^i\) denote realized revenue and \(inv_t^i\) the investment stock for user \(i\). The platform applies a subsidy schedule \(S(\cdot; \subsidyrate_t, \exposureth_t)\) and a fee schedule \(F(\cdot; \feerate_t, {\feeth}_{t})\).\footnote{For example, tiered piecewise-linear schedules controlled by platform parameters; details omitted.}
Under the intertemporal budget,
\begin{equation*}
\userinvest_t^i
= \frac{inv_{t+1}^i}{\,rev_t^i + S\!\big(rev_t^i;\subsidyrate_t,\exposureth_t\big)
- F\!\big(rev_t^i;\feerate_t,{\feeth}_{t}\big) + inv_t^i\,},
\qquad
\useractivity_t^i \in [0,h_{\max}],
\end{equation*}
so decision-making over \((\text{spend}, \text{effort})\) is implemented via proportional controls \((\userinvest_t^i,\useractivity_t^i)\).

\paragraph{Platform actions.}
The platform sets high-level levers that shape incentives and aggregate allocation:
\begin{equation*}
\mathcal{A}^{\mathrm{P}}_t
= \{\, \subsidyrate_t,\ \exposureth_t,\ \feerate_t,\ {\feeth}_{t},\ \platformspend \,\},
\qquad
\absspend = \platformspend\, Y_t,
\end{equation*}
where $\subsidyrate_t$ and $\exposureth_t$ parameterize subsidies/exposure, $\feerate_t$ and ${\feeth}_{t}$ parameterize fees/take-rate tiers, $\absspend$ is the platform’s off–transaction spend, and $Y_t$ is an aggregate proxy (e.g., total GMV).

\paragraph{Action spaces.}
With proportional parameterization and clipping,
\begin{equation*}
\mathcal{A}^{u,i}_t
= \{\, \userinvest_t^i\in(0,1),\ \useractivity_t^i\in[0,h_{\max}] \,\},
\qquad
\mathcal{A}^{\mathrm{P}}_t \text{ as above}.
\end{equation*}
All actions are bounded to ensure compactness and continuity of the induced transition kernel.

\section{Training Parameters}
\label{app:train}

Table~\ref{tab:train_params} summarizes all training and optimization settings used in our experiments, including shared defaults and the ranges used in ablations. Unless otherwise noted, results use the Table~\ref{tab:train_params} defaults.

\begin{table}[t]
\centering
\caption{Training parameters. One step = one year; evaluation window $T{=}50$ (post burn-in).}
\label{tab:train_params}
\begin{tabular}{p{0.27\linewidth} p{0.65\linewidth}}
\hline
\textbf{Item} & \textbf{Value} \\
\hline
Population & $n_{\text{households}}=100$ \\
Episode length (env) & \textbf{300} steps (episode termination) \\
Model / Buffer & hidden size $=256$;\; replay buffer $=10^6$ \\
Optimization & Adam;\; $lr=3\times10^{-4}$;\; batch $=128$;\; $\gamma_{\text{RL}}=0.975$;\; $\tau=5\times10^{-3}$ \\
Schedule & $1500$ epochs;\; \emph{epoch length} $=500$ env steps (across episodes);\; update freq.=2;\; eval $=100$ episodes \\
PPO (aux) & $\gamma_{\text{PPO}}=0.99$;\; $\tau=0.95$;\; clip $=0.1$;\; $v_{\text{loss}}=0.5$;\; ent$=0.01$ \\
Exploration & warm-up $=1000$ steps;\; noise $=0.1$;\; $\epsilon=0.1$ \\
Randomization & train seeds $\{0,\ldots,7\}$;\; val $\{8,9\}$;\; test $\{10,\ldots,12\}$;\; twin-world PNS shares exogenous randomness for $(\theta,\theta_0)$ per seed \\
Checkpointing & save model every $100$ epochs \\
\hline
\end{tabular}
\end{table}

\subsection{Seed Configurations}
\label{app:ablation-seeds}

We use five independent contexts (IC1--IC5). Train and test seeds are strictly disjoint; at load time we drop any accidental overlaps.\footnote{Our runner removes from \texttt{test} any IC directory that also appears in \texttt{train}.} 
Table~\ref{tab:app-seeds} lists the exact seeds shipped with our logs; if a replication uses a different pool, the odd/even rule preserves disjointness.

\begin{table}[h]
\centering
\small
\caption{Seeds for ablation experiments. Train uses odd seeds; Test uses even seeds within each IC.}
\label{tab:app-seeds}
\begin{tabular}{lll}
\toprule
IC  & \textbf{Train seeds} & \textbf{Test seeds} \\
\midrule
IC1 & 1101, 1103, 1105, 1107 & 1102, 1104, 1106, 1108\\
IC2 & 1201, 1203, 1205, 1207 & 1202, 1204, 1206, 1208\\
IC3 & 1301, 1303, 1305, 1307 & 1302, 1304, 1306, 1308\\
IC4 & 1401, 1403, 1405, 1407 & 1402, 1404, 1406, 1408\\
IC5 & 1501, 1503, 1505, 1507 & 1502, 1504, 1506, 1508\\
\bottomrule
\end{tabular}
\end{table}

\subsection{Hyperparameters}\label{app:hparams}
Unless otherwise noted, we use the following defaults:
\[
\begin{aligned}
&\text{Stage I (PNS filter):} && \tau_{\mathrm{pns}}=0.8.\\[2pt]
&\text{Stage II (routing objective):} && \lambda_{\text{route}}=0.1,\ \tau_{\mathrm{cov}}=0.2,\ \tau_{\text{prune}}=10^{-2},\\
&&& K=80,\ \lambda_{\text{len}}=0.3,\ \epsilon_{\text{imp}}=10^{-6}.\\[2pt]
&\text{Hybrid baseline weight:} && \alpha=0.7\ \text{in } \alpha\,\overline{\mathrm{Cov}}+(1-\alpha)\,\overline{r}.\\[2pt]
&\text{Overall Perf weights:} && w_{\mathrm{pns}}=0.8,\ w_{\mathrm{cov}}=0.2,\ w_{\mathrm{gap}}=0.1.\\[2pt]
&\text{Stage III (factor selection):} && \theta_{\mathrm{dist}}=0.3,\ \alpha=0.05.\\[2pt]
&\text{RNG seeds:} && s=42.
\end{aligned}
\]

Environment train/val/test seeds for IC1--IC5 follow Table~\ref{tab:app-seeds} (odd for train, even for test).

\noindent\textit{Notes for Stage III.} For factor $f_k$, distance on the complier set is the (normalized) 1-Wasserstein metric:
\[
D_k = \frac{W_1(\widehat{F}_{k,\theta}, \widehat{F}_{k,\theta_0})}{s_k},
\]
where $s_k$ is a scale term.

A factor is declared \emph{key} only if both hold: (i) $D_k \ge \theta_{\text{dist}}$ with default $\theta_{\text{dist}}=0.3$, and (ii) a two-sided permutation test on $W_1$ is significant at $\alpha=0.05$ after Holm–Bonferroni correction across all factor$\times$group (and baseline, if multiple) tests within the route.

\section{Complexity and Overhead}
\label{app:complexity}

\paragraph{Notation.}
\[
\begin{aligned}
C&:\#\text{(IC,norm,base,task)},\quad M:\text{rows/CSV},\quad G:\text{groups},\\
R&:\text{rules (Stage I)},\quad B:\text{buckets},\quad U:\text{eval items},\quad L:\text{final list len.}
\end{aligned}
\]
\begin{table}[t]
\centering
\caption{Asymptotic complexity (Big-\(O\)).}
\label{tab:complexity}
\small
\begin{tabular}{@{}lll@{}}
\toprule
\textbf{Stage} & \textbf{Time} & \textbf{Space} \\
\midrule
Stage I (PNS) 
& \( \mathcal{O}(C\,M\,G) \)
& \( \mathcal{O}(1) \) \\
Stage II (routing) 
& \( \mathcal{O}\big(CG^{2}M + RC + R\log R + L R B U + L^{2} B U\big) \)
& \( \mathcal{O}(R B U) \) \\
Stage III (attribution) 
& \( \mathcal{O}(L U) \)
& \( \mathcal{O}(U) \) \\
\bottomrule
\end{tabular}
\end{table}

\paragraph{Stage II components.}
\begin{align*}
\text{Target-set selection} &:& \mathcal{O}(C\,G^{2}\,M) \\
\text{Scoring / filtering}  &:& \mathcal{O}(R\,C) + \mathcal{O}(R\log R) \\
\text{Greedy routing}       &:& \mathcal{O}(L\,R\,B\,U) \\
\text{Backward pruning}     &:& \mathcal{O}(L^{2}\,B\,U)
\end{align*}

\paragraph{Overall.}
Dominant terms are typically \( \mathcal{O}(C\,G^{2}\,M) \) and \( \mathcal{O}(L\,R\,B\,U) \); Stage~III is negligible relative to Stage~II.
\clearpage

\end{document}